\documentclass[conference]{IEEEtran}
\IEEEoverridecommandlockouts

\usepackage{cite}
\usepackage{graphicx,subcaption,amsmath,amssymb,amsthm,hyperref,amsfonts, array, soul}
\graphicspath{ {./images/} }
\usepackage{dsfont} 
\usepackage{amsmath,amssymb,amsfonts}
\usepackage{algorithm}
\usepackage[noend]{algorithmic}
\usepackage{graphicx}
\usepackage{textcomp}
\usepackage{xcolor}
\def\BibTeX{{\rm B\kern-.05em{\sc i\kern-.025em b}\kern-.08em
    T\kern-.1667em\lower.7ex\hbox{E}\kern-.125emX}}

\allowdisplaybreaks
\newcommand{\beqano}{\begin{eqnarray*}}
\newcommand{\eeqano}{\end{eqnarray*}}
\newcommand{\beqa}{\begin{eqnarray}}
\newcommand{\eeqa}{\end{eqnarray}}

\newcommand{\bi}{\begin{itemize}}
\newcommand{\ei}{\end{itemize}}

\newcommand{\be}{\begin{enumerate}}
\newcommand{\ee}{\end{enumerate}}

\newcommand{\del}[1]{\Delta_{#1}}
\newcommand{\tdel}[1]{\tilde{\Delta}_{#1}}
\newcommand{\ucb}{\text{UCB}}
\newcommand{\lcb}{\text{LCB}}
\newcommand{\sync}{\text{sync}}
\newcommand{\ind}{\text{ind}}
\newcommand{\dir}{\text{dir}}
\newcommand{\free}{\text{free}}

\newcommand{\cN}{\mathcal{N}}
\newcommand{\cC}{\mathcal{C}}
\newcommand{\cP}{\mathcal{P}}

\newcommand{\bE}{\mathbb{E}}
\newcommand{\bP}{\mathbb{P}}
\newcommand{\cU}{\mathcal{U}}
\newcommand{\cG}{\mathcal{G}}
\newcommand{\cK}{\mathcal{K}}

\newcommand{\cD}{\mathcal{D}}
\newcommand{\cB}{\mathcal{B}}
\newcommand{\cM}{\mathcal{M}}
\newcommand{\cF}{\mathcal{F}}

\newcommand{\vz}{\mathbf{z}}

\newcommand{\E}{\mathbb{E}}

\newtheorem{assumption}{Assumption}
\newtheorem{theorem}{Theorem}
\newtheorem{lemma}{Lemma}

\begin{document}

\title{An LP-based Sampling Policy for Multi-Armed Bandits with Side-Observations and Stochastic Availability \thanks{This work has been supported in part by the Army Research Laboratory and was accomplished under Cooperative Agreement Number W911NF-23-2-0225, the ARO Grant W911NF-24-1-0103, and by the U.S. National Science Foundation under the grants: NSF AI Institute (AI-EDGE) 2112471, CNS-NeTS-2106679, CNS2312836, CNS-2225561, and CNS-2239677. Additionally, this research was supported by the Office of Naval Research under grant N00014-24-1-2729. The views and conclusions contained in this document are those of the authors and should not be interpreted as representing the official policies, either expressed or implied, of the Army Research Laboratory or the U.S. Government. The U.S. Government is authorized to reproduce and distribute reprints for Government purposes, notwithstanding any copyright notation herein.}}
\author{\IEEEauthorblockN{Ashutosh Soni\IEEEauthorrefmark{1}, Peizhong Ju\IEEEauthorrefmark{2}, Atilla Eryilmaz\IEEEauthorrefmark{1} and Ness B. Shroff\IEEEauthorrefmark{1}\IEEEauthorrefmark{3}}
\IEEEauthorblockA{\IEEEauthorrefmark{1}Department of Electrical and Computer Engineering, The Ohio State University, Columbus, Ohio, USA}
\IEEEauthorblockA{\IEEEauthorrefmark{2}Department of Computer Science, University of Kentucky, Lexington, Kentucky, USA}
\IEEEauthorblockA{\IEEEauthorrefmark{3} Department of Computer Science and Engineering, The Ohio State University, Columbus, Ohio, USA}
\IEEEauthorblockA{Email: \{soni.117, eryilmaz.2, shroff.11\}@osu.edu, peizhong.ju@uky.edu}
}


\maketitle

\begin{abstract}
We study the stochastic multi-armed bandit (MAB) problem where an underlying network structure enables side-observations across related actions.
We use a bipartite graph to link actions to a set of unknowns, such that selecting an action reveals observations for all the unknowns it is connected to.
While previous works rely on the assumption that all actions are permanently accessible, we investigate the more practical setting of stochastic availability, where the set of feasible actions (the ``activation set") varies dynamically in each round.
This framework models real-world systems with both structural dependencies and volatility, such as social networks where users provide side-information about their peers' preferences, yet are not always online to be queried.
To address this challenge, we propose UCB-LP-A, a novel policy that leverages a Linear Programming (LP) approach to optimize exploration-exploitation trade-offs under stochastic availability.
Unlike standard network bandit algorithms that assume constant access, UCB-LP-A computes an optimal sampling distribution over the realizable activation sets, ensuring that the necessary observations are gathered using only the currently active arms.
We derive a theoretical upper bound on the regret of our policy, characterizing the impact of both the network structure and the activation probabilities. 
Finally, we demonstrate through numerical simulations that UCB-LP-A significantly outperforms existing heuristics that ignore either the side-information or the availability constraints.
\end{abstract}

\begin{IEEEkeywords}
Multi-armed Bandits, Side-Observations, Action Availability, Activation Sets, LP Minimization, Regret Bounds.
\end{IEEEkeywords}

\section{Introduction}
The Multi-Armed Bandit (MAB) framework serves as a fundamental model for sequential decision-making under uncertainty \cite{lai1985asymptotically}
In the classical setting, a decision-maker sequentially selects an action and observes a reward drawn from an unknown distribution to minimize regret, which is defined as the difference between the total reward obtained from the action with the highest average reward and the given policy's total reward.

While standard MAB formulations assume actions are independent, many real-world systems exhibit an underlying structure where they can be correlated.
This means that choosing an action not only generates a reward for itself, but also reveals some useful \textit{side-information} for a subset of the remaining actions, significantly accelerating learning and reducing regret.
This setup has been studied in \cite{buccapatnam2018reward}, and it forms a basis for our proposed policy design and analysis.
We can model this relationship between different actions using a bipartite graph between the set of \textit{actions} and a common set of unknowns called \textit{base-arms} (see Figure \ref{fig:side_info_model_2}).
The reward from each action is a known function of a subset of the base-arms, and choosing an action reveals observations from all base-arms connected to this action.

Such a side-information structure between actions becomes available in a variety of applications. For example, consider the problem of \textit{routing} in communication networks, where packets are to be sent over a path (set of links) from source to destination.
The total delay is the sum of unknown individual link delays. 
Traversing one path reveals observations for delays on each of the constituent links and hence provides partial information about other paths that share those links.
Advertising in online social networks through promotional offers can be another example.
Assume that a social network platform targets user X with a promotion while also advertising the same to his friends, saying - `user X was just offered a promotion, would be interested in the same too?'
The responses of the friends would be independent of the fact that user X accepted or rejected the offer.

A critical limitation in existing literature is the assumption of constant action availability, i.e., every action is available at every time step. 
This assumption rarely holds in practice; for instance, social network users are not perpetually online, and wireless links are subject to intermittent outages. 
In this work, we extend the standard framework to model stochastic availability, where action unavailability is not purely random but statistically correlated. 
In social networks, for example, user activity is often group-dependent rather than independent. 
Similarly, in network routing, link failures are frequently spatially correlated—such as when localized interference or jamming attacks disable multiple related links simultaneously.

To model this, we introduce the concept of discrete \textit{activation sets}. 
Rather than assuming independent availability probabilities for each action, we assume that at any given moment, the set of available actions is drawn from a known collection of subsets (activation sets), each with a distinct probability of occurrence. 
This structure captures the correlated nature of availability, representing, for instance, specific configurations of active users in a social network. 
Our objective is to design a policy that identifies the optimal action within each activation set while leveraging the underlying graph-based side-information.

The introduction of activation sets fundamentally alters the exploration-exploitation trade-off since samples from actions available at some time can influence learning about actions available at a different time. 
Standard side-information algorithms (like UCB-LP from \cite{buccapatnam2018reward}, and UCB-MaxN from \cite{caron2012leveraging}) become infeasible if the solution to their optimization problem requires choosing an action that is currently inactive. 
Conversely, standard bandit algorithms ignore the graph structure, leading to inefficient sampling.

In this work, we aim to provide a policy that can exploit the side information structure in the presence of activation sets in a general stochastic MAB problem.
Our main contributions are as follows:
\begin{itemize}
    \item We formulate the stochastic MAB problem with side-observations and activation sets, modeling availability as a probabilistic process over subsets of arms.
    \item We develop the UCB-LP-A policy, which solves a Linear Program (LP) that explicitly accounts for the side information structure along with the activation sets and their probabilities, ensuring a more efficient method of sampling.
    \item We provide a theoretical analysis of UCB-LP-A, deriving a regret upper bound that characterizes the dependency on the network structure and the activation probabilities.
    \item We demonstrate through simulations on both synthetic and real-world network topologies that UCB-LP-A outperforms existing baselines that fail to account for either the side-information or the availability constraints.
\end{itemize}

The model considered in this work is an important first step in the direction of more general models that can potentially learn the correlations between action availability on the go instead of assuming knowledge.

\section{Related Works}
\textbf{Side-Observation:}
This line of research explores models where choosing a single action reveals feedback for multiple neighboring arms, typically formalized through a graph-based side-observation structure.
The linear scaling of regret with the number of suboptimal arms renders traditional bandit policies ineffective for large action spaces typical of content recommendation and advertising.
To address this limitation, researchers have developed richer models that leverage additional information shared across reward distributions. Prominent examples include dependent bandits in \cite{pandey2007multi}, X-armed bandits in \cite{bubeck2011x}, linear bandits in\cite{rusmevichientong2010linearly}, contextual bandits with side observation in \cite{li2010contextual}, and combinatorial bandits in \cite{chen2013combinatorial}.
The works in \cite{mannor2011bandits}, \cite{caron2012leveraging} (proposed UCB-N and UCB-MaxN policies), and \cite{buccapatnam2014stochastic} handled the large number of actions by assuming that choosing an action reveals observations from a larger set of actions.
The work in \cite{buccapatnam2018reward} extends the setting used in \cite{caron2012leveraging} and \cite{buccapatnam2014stochastic} to a more general graph feedback structure between a set of actions and a set of common unknowns.
This is the setting used in this paper, and the proposed policy builds on the ideas from this work.
The work in \cite{cohen2016online} studied the MAB problem with graph based feedback structure similar to \cite{mannor2011bandits} and \cite{buccapatnam2014stochastic} but assumed that the structure is never fully revealed.
In many real-world cases, like routing problems in communication networks, the graph structure is known or is learnt apriori.
In \cite{chen2013combinatorial}, the authors considered that a subset of base-actions form super actions, and in each round, choosing a super action reveals the outcome of its constituent actions.
Their proposed policy does not utilize the underlying network structure between base actions and super actions.

\textbf{Stochastic Availability:}
The assumption of constant arm availability is frequently violated in real-world scenarios. 
The problem of time-varying availability is primarily studied under the ``Sleeping Bandit'' framework initiated in \cite{kleinberg2010regret}, who established regret bounds against the best available action, and \cite{kanade2009sleeping} extends this idea for the adversarial setting.
More recently, \cite{li2019combinatorial} studied the combinatorial version of sleeping bandits with fairness constraints.
Other extensions include ``Volatile Bandits'' from \cite{bnaya2013volatile} and ``Mortal Bandits'' from \cite{chakrabarti2008mortal}, which model stochastic availability in dynamic environments.
However, none of these approaches explicitly leverage the \textit{side observation} structure central to our work. 

Our work bridges this gap between the two sets of work by integrating the graph-based side observation model of \cite{buccapatnam2018reward} with the stochastic activation sets.




\section{Problem Formulation}
In this section, we formally define the general bandit problem with activation sets in the presence of side-information across actions.
Let $\cN = \{1,\dots,N\}$ be a set of \textit{base-arms} with unknown distributions, and let $\cK = \{1,\dots,K\}$ denote the set of actions. 
At each time $t$, an \textit{activation set} $\cK_t$ is sampled from a collection $\{\cK_1, \dots, \cK_A\}$ according to a fixed distribution $\mathbf{p}$, such that $\mathbb{P}(\cK_t = \cK_a) = p_a$. 
The decision maker must select an action $k \in \cK_t$ (e.g., choosing a currently active user in a social network).


Let $X_i(t)$ denote the random reward of base-arm $i$ at time $t$, assumed independent and identically distributed (i.i.d.) over time and independent across base-arms. 
Choosing action $j$ reveals outcomes for the set of base-arms $\cC_j \subseteq \cN$. 
Conversely, let $\cP_i = \{j \in \cK : i \in \cC_j\}$ denote the set of actions that observe base-arm $i$. 
While choosing action $j$ yields observations for the entire set $\cC_j$, the instantaneous reward is computed via a known function $f_j(\cdot)$ based only on a subset $\cF_j \subseteq \cC_j$. 
This distinction $\cF_j \subseteq \cC_j$ captures the 
idea that some base-arms are observed by an action but do not contribute to its reward.
Let $\textbf{X}_j(t) = [X_i(t)]_{i\in\cF_j}$ denote the vector of relevant outcomes. The reward for action $j$ at time $t$ is then determined by $f_j(\textbf{X}_j(t))$.


We assume that the reward is bounded in $[0,1]$ for each action.
Note that we only assume that the reward function $f_j(.)$ is bounded, and the specific form of $f_j(.)$ and $\cF_j$ is determined by the decision maker or the specific problem. 
Let $\mu_j$ represent the mean of reward on playing action $j$.
Let $i^*_a$ and $\mu^*_a$ represent the optimal action and the optimal reward for activation set $\cK_a$.
Denote $\cU_a=\cK_a\setminus i^*_a$ as the set of suboptimal actions for set $\cK_a$.
Different $\cK_a$ can have the same optimal action, as an action $j$ can belong to many activation sets.

\subsection{Side-information model}

The structural relationship between actions $\cK$ and base-arms $\cN$ is modeled by a bipartite graph $G=(\cK,\cN,E)$ with adjacency matrix $E=[e_{i,j}]$, where $e_{i,j}=1$ if $i \in \cC_j$ and $0$ otherwise. 
An edge $(j,i)$ implies that choosing action $j$ yields a realization of base-arm $i$. 
Intuitively, the graph $G$ captures the side-observation capabilities, while the collection $\{\cF_j\}_{j\in\cK}$ dictates the reward structure. 
We assume $\bigcup_{j\in\cK}\cF_j=\cN$, ensuring no redundant base-arms exist in the system.

\begin{figure}[h]
    \centering
    \includegraphics[scale=0.27]{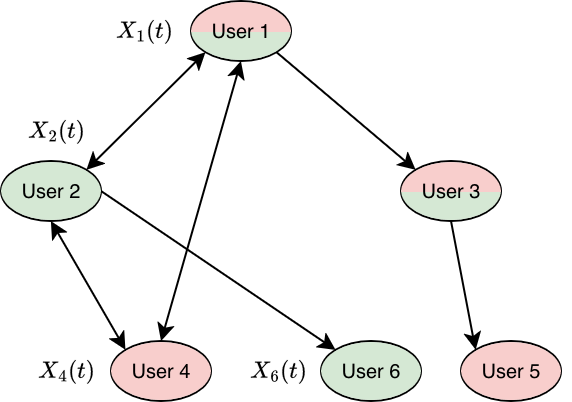}
    \caption{Example of a social network with 2 activation sets and side information.}
    \label{fig:side_info_model_1}
\end{figure}


Figure \ref{fig:side_info_model_1} visualizes the framework in a social network context where the set of actions (users) is also the set of base-arms ($\cK=\cN$). 
Two activation sets, $\cK_1$ (red) and $\cK_2$ (green), represent distinct groups of currently online users. 
We can observe that users 1 and 3 participate in both sets.
The network structure facilitates side-observations; for instance, if $\cK_2$ is active, selecting user 2 reveals its own response $X_2(t)$ along with the responses of its neighbors $\{X_1(t), X_4(t), X_6(t)\}$.
\begin{figure}[h]
    \centering
    \includegraphics[scale=0.27]{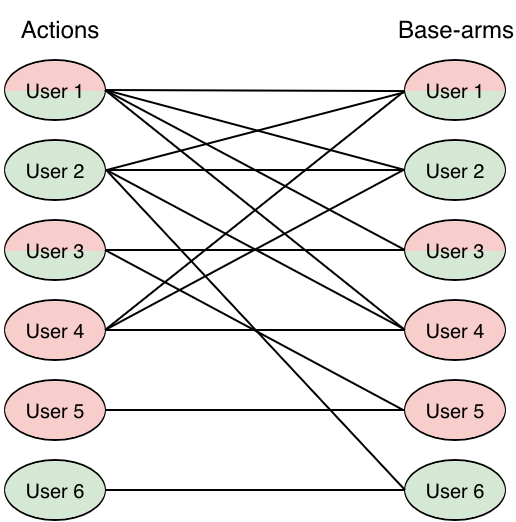}
    \caption{Bipartite graph for the example of targeting users in an online social network.}
    \label{fig:side_info_model_2}
\end{figure}

Figure \ref{fig:side_info_model_2} presents the bipartite representation of the previous example, distinguishing activation sets $\cK_1$ and $\cK_2$ by color. 
The graph explicitly maps observation dependencies; for instance, selecting action $2$ yields $\cC_2=\{1,2,4,6\}$. The actual reward is then determined by the specification of $\cF_2$. If $\cF_2=\{1,2,4,6\}$ and $f_2(\cdot)$ is a summation, the reward aggregates feedback from all neighbors. Conversely, if $\cF_2=\{2\}$, the reward is derived solely from user 2, treating observations from $\{1,4,6\}$ as pure side-information.



The framework accommodates general bounded reward functions and extends beyond simple social graphs. 
The bipartite structure can model higher-order dependencies, such as observing responses from a ``friend of a friend'' (two-hop neighborhood). 
Additionally, edges may represent latent similarity rather than explicit friendship, linking users with comparable preference profiles, a common abstraction in recommendation systems like Yelp.

\subsection{Objective}

At each time step $t$, an activation set $\cK_{a_t}$ is realized from the collection $\{\cK_1, \dots, \cK_A\}$ according to a fixed distribution $\mathbf{p}$, where $p_a$ denotes the probability of selecting $\cK_a$. 
An allocation strategy $\phi$ observes the active set $\mathcal{K}_{a_t}$ and selects an available action $j \in \cK_{a_t}$. 
Formally, $\phi$ is a sequence of random variables $\{\phi(t)\}_{t \geq 1}$ where $\phi(t)$ represents the action chosen at time $t$ conditioned on the realized activation set $\cK_{a_t}$.

Let $N_{j,a}^\phi(T)$ be the total number of times action $j\in\cK_a$ is chosen by policy $\phi$, when set $\cK_a$ was active, up to time $T$. 
For each action, rewards are only obtained from the chosen action by the policy (side-observations do not contribute to the total reward).
The regret of policy $\phi$ at time $T$ for a fixed reward profile $\boldsymbol{\mu} = (\mu_1, \dots, \mu_K)$ and activation sets $\{\cK_1,\dots,\cK_A\}$, is defined as the expected difference between the reward of the optimal action in the active set and the chosen action:
\begin{align}\label{eq:regret_def}
    R_{\boldsymbol{\mu}}^\phi(T) = \mathbb{E} \left[ \sum_{t=1}^T \left( \mu^*_{a_t} - \mu_{\phi(t)} \right) \right] = \sum_{a=1}^A \sum_{j \in \cU_a} \Delta_{j,a} \E[N_{j,a}^\phi(T)],
\end{align}
where $\mu^*_a \triangleq \max_{k \in \mathcal{K}_a} \mu_k$ is the maximum mean reward in set $\mathcal{K}_a$, and $\Delta_{j,a} \triangleq \mu^*_a - \mu_j$ is the suboptimality gap of action $j\in\cK_a$. 
The objective is to find policies that minimize the growth rate of this regret as a function of time for every given fixed network $G$, activation sets $\{\cK_1,\dots,\cK_A\}$, and probability distribution $\mathbf{p} = \{p_1, \dots, p_A\}$. 
Henceforth, we drop the superscript $\phi$ unless it is required.


\section{LP Minimization Problem}
In this section, we describe the need for and define the LP minimization problem that we consider.
Figure \ref{fig:side_info_model_1} shows the different data acquisition pathways available in this framework. 
Consider the objective of estimating the mean reward for user 4. 
When activation set $\cK_1$ is realized, user 4 is active; thus, samples can be obtained directly by targeting user 4, or indirectly via side-observations by targeting user 1. Crucially, even when user 4 is unavailable (i.e., during activation set $\cK_2$), information can still be harvested by targeting its active neighbors, user 1 or user 2. This demonstrates that information gain is not strictly bound by an arm's instantaneous availability. By strategically leveraging the network topology and weighing the occurrence probabilities $\mathbf{p}$, our policy efficiently accumulates observations for specific arms through these side-channels, even when those arms are effectively `offline'.


To efficiently exploit the side-information structure, we require a sampling strategy that accounts for both network topology and activation set probabilities. Ideally, one would minimize cumulative regret; however, since sub-optimality gaps are unknown \textit{a priori}, we instead adopt a \textit{best-observation} strategy. We optimize for informational efficiency by minimizing the total sampling effort required to ensure every base-arm is sufficiently observed.

Inspired by \cite{buccapatnam2018reward}, we formulate the Linear Program (LP) $P_1$, where $z_{j,a}$ denotes the weight assigned to action $j$ specifically when set $\cK_a$ is active.
\begin{align}\label{eq:lp_problem_def}
     P_1:& \min_{\vz \geq 0} \sum_{a\in[A]}p_a \sum_{j\in\cK_a} z_{j,a} \\
     \text{s.t. }& \sum_{a\in[A]} p_a \sum_{j\in \cK_a \cap \cP_i} z_{j,a} \geq 1, \forall i\in\cN, \nonumber\\
     \text{and } & \sum_{j\in\cK_a} z_{j,a} \ge \epsilon, \forall a\in[A],\nonumber\\
     \text{and } & z_{j,a} \ge 0, \forall j\in\cK, \forall a\in[A] \nonumber.
\end{align} 
The objective function minimizes the total expected sampling frequency. The primary constraint imposes a global \textit{observability requirement}, ensuring that every base-arm $i$ accumulates at least unit mass of observation across all activation sets. A small constant $\epsilon > 0$ is included to guarantee non-zero exploration in every set. The optimal solution $\textbf{z}^* = \{(z^*_{j,a})_{j\in\cK_a}\}_{a\in[A]}$ is used for network-aware sampling in Algorithm \ref{alg:ucb-lp-new}.

\section{UCB-LP-A Policy}

In this section, we first outline the UCB-LP framework, which motivates our design. Subsequently, we detail the proposed UCB-LP-A policy (Algorithm \ref{alg:ucb-lp-new}).

\subsection{UCB-LP Explained}
UCB-LP-A builds on the UCB-LP policy \cite{buccapatnam2018reward}, which in turn extends the improved UCB framework from \cite{auer2010ucb}. 
While the original UCB-LP did not account for activation sets, we adapt its core structure to our setting using the modified LP-minimization problem (\ref{eq:lp_problem_def}).

In base UCB as in \cite{auer2010ucb}, the policy estimates the values of $\Delta_i$ in each round by a value $\tdel{m}$ which is initialized to 1 and halved in each round $m$.
This version of the UCB eliminates suboptimal actions at the end of each round $m$, where if the UCB score of an action $j$ is less than the LCB score of any active action (the actions not eliminated by round ($m-1$)), then action $j$ is marked as \textit{most likely not optimal} and hence deleted.
The elimination condition is given in the elimination subroutine in Algorithm \ref{alg:action-elimination}.
The core idea is that we need the policy to draw $n(m)$ observations for each active action to end round $m$, where $n(m)$ depends on the round $m$ and time $T$.



UCB-LP \cite{buccapatnam2018reward} adapts this elimination framework by leveraging side-information to satisfy the sample requirement $n(m)$. For each round $m$, the policy assigns a sampling budget of $z^*_j \Delta n_m$ to every action $j$, where $\Delta n_m = n(m) - n(m-1)$. The underlying LP constraints guarantee that this allocation yields at least $\Delta n_m$ observations for every base-arm. Consequently, sufficient statistics are acquired for all active actions, either directly or via side-information, often without requiring direct pulls for every candidate.



This static budgeting strategy, however, relies on the premise that all actions are always available. 
In our setting, the introduction of activation sets renders availability stochastic. 
We cannot simply mandate that an action $j$ be played $z^*_j \Delta n_m$ times consecutively, as the action may be inactive during the required window. 
Consequently, the direct application of UCB-LP is infeasible. 

\subsection{UCB-LP-A Explained}
The proposed UCB-LP-A policy is described in Algorithm \ref{alg:ucb-lp-new}, where Algorithm \ref{alg:action-elimination} serves as a sub-routine to perform the elimination step.
Similar to UCB-LP, there are two methods of achieving $\Delta n_m$ samples for each active action in round $m$ for any $\cK_a$ (the set of active actions for $\cK_a$ is denoted as $\cB_a$), -- 1) \textit{$z^*$-sampling} where we pull each action $j$ \textit{(not just active ones)} with probability $({z^*_{j,a}}/{\sum_{j\in\cK_a} z^*_{j,a}})$ when set $\cK_a$ is active until we get the required samples, and 2) \textit{uniform sampling} where we simply pull each \textit{active action} $\Delta n_m$ times.
Note that $z^*$-sampling exploits the side information structure and tries to choose highly-informative actions more often, while uniform sampling doesn't and treats each action as the same.
Further, $z^*$-sampling requires us to rely on samples possibly from other activation sets and hence forces us to solve for the sample requirement across activation sets together, while uniform sampling simply treats each set independently and does not have a global view.

For every round $m$, starting with $m=0$, for any $\cK_a$, we need to decide on the sampling method to be used.
There is a clear trade-off between these methods since $z^*$-sampling might be efficient at first; however, since the set of active actions, $\cB_a$, shrinks over time, uniform sampling will eventually become cheaper.
The proposed policy is meant to eliminate action $j\in\cK_a$ by the first round $m$ s.t. $2^{-m}<\Delta_{j,a}/2$, which means that we have a better chance of shortlisting the optimal action if we can run for more rounds within the same time horizon $T$.
Hence, we must choose the method that reduces the time steps required to finish round $m$ for each $\cK_a$.
However, since we want to minimize regret, instead of time steps, the question we must ask is - `Is the expected regret to finish round $m$ for all sets $\cK_a$ using $z^*$-sampling cheaper than uniform sampling?'

Observe that while using $z^*$-sampling, we rely on the samples generated by choosing actions in the other activation set.
Hence, if we perform elimination for $\cK_1$ while $\cK_2$ is still learning, then we break the dependency and cannot benefit from the side-information structure.
Therefore, $z^*$-sampling requires us to force every activation set to sync their local round counters $m_a$, and they perform elimination only when the sample requirement for each set is satisfied.

Hence, the algorithm can run under 2 different phases called the \textit{forced-sync phase} and the \textit{independent phase}.
The forced-sync phase corresponds to the case when we use $z^*$-sampling, where the local round counters $m_a$ are synced together, and each set performs the elimination step together.
The independent phase refers to a case where we simply apply uniform sampling and the local round counters $m_a$ can evolve independently for each set $\cK_a$.

\begin{algorithm}[!ht]
\caption{UCB-LP-A policy}
\begin{algorithmic}[1]
\STATE \textbf{Input:} Graph $G$, set of actions $\cK$, activation sets $\cK_a\: \forall a \in [A]$, its distribution $\textbf{p} = [p_1, p_2, ..., p_{A}]$, time horizon $T$, and optimal solution $\textbf{z}^*$ of the LP problem.
\STATE \textbf{Initialization:} Let $m_a = 0$, $\tilde{\Delta}_{a} = 1$, $\cB_{a}, \cB^{def}_a = \cK_a$, \textit{end\_round(a)}= \FALSE \:$\forall a \in [A]$, \textit{round\_sync}=\FALSE, $\hat{\textbf{f}} = [\hat{f}_j]_j$, $\textbf{T} = [T_j]_j$, with $\hat{f}_j,T_j=0\:\forall j \in \cK,\: v_{\min} = \min_i \left(\sum_a p_a \sum_{j \in \cK_a \cap \cP_i} \dfrac{z^*_{j,a}}{Z^*_a}\right)$.
\FOR{time $t = 1, 2, 3, ..., T$}

    \STATE Sample $a \in [A]$  with probability $p_a$, i.e., $\cK_a$ is active, set $m=m_{a}$. \COMMENT{\textit{Activation Set Selection}}

    \IF{\textit{round\_sync}=\FALSE}
    \STATE Set $\bar{\cB}=\{\cB_a\}_a$ and calculate $R=\sum_{i \in \bigcup_{a} \cB_a} R_i(\bar{\cB})$ using Equation \ref{eq:cal_r}.
    
    \IF[\textit{Phase Selection}]{$\left(\dfrac{1}{v_{\min}} \le 2\tdel{a} R \right)$} 
        \STATE Set \textit{round\_sync}=\TRUE.
    \ENDIF
    \ENDIF
    
    \IF[\textit{$m_{a}$ is off limits}]{$\left(m_{a} \ge \left\lfloor \frac{1}{2}\log_2 T  \right\rfloor\right)$}
        \STATE Select the current optimal action in $\cB_a$.
    \ELSE[\textit{$m_{a}$ is within limits}]
    
    \STATE Define $n(k) := \left\lceil \dfrac{2 \log(T \tilde{\Delta}_{k}^2)}{\tilde{\Delta}_{k}^2} \right\rceil$, where $\tdel{k}=2^{-k}$.
 
    \IF{$(|\cB_{a}| = 1)$}
        \STATE Pull the single action in $\cB_{a}$.
    \ELSIF[\textit{Forced-Sync}]{(\textit{round\_sync}=\TRUE)}
    \STATE Pull action $k \in \cK_{a}$ with $\mathbb{P}(k=j)= \left(\frac{z^*_{j,a}}{Z^*_{a}}\right)$, get rewards for base-arms $j\in\cC_k$, stitch together samples to get free sample for actions, and update $T_j,\:\forall j\in\cK$.


        \IF{$(T_j=n(m)\:\forall j \in \cK_b$ for some $b\in[A])$}
        \STATE Set $end\_round(b) = \TRUE,\:\forall$ such $b\in[A]$.
        \ENDIF
        


    \ELSE[\textit{Independent Phase}]
        \STATE Pull any action $k \in \cB^{def}_{a}$, get rewards for base-arms $j\in\cC_k$, stitch together samples to get free sample for actions and update $T_j,\forall j\in\cK$.
        \IF{$(T_j=n(m_b)$ for some $j\in \cC_k$ for some $b\in[A]$ with $j\in \cB^{def}_b)$}
        \STATE Delete $j$ from $\cB^{def}_b,\forall$ such $j,\forall$ such $b\in[A]$.
        \ENDIF
        \IF{($\cB^{def}_{b} = \varnothing$ for some $b\in[A]$)}
        \STATE Set $end\_round(b)=\TRUE,\:\forall$ such $b\in[A]$.
        \ENDIF
    \ENDIF

    
    
    \STATE Update mean estimate $\hat{f}_j,\:\forall j\in\cK$ using new samples and observation count $T_j$.

    \ENDIF

    \IF[\textit{Elim. - Forced-Sync}]{(\textit{round\_sync}=\TRUE)}
    \IF{(\textit{end\_round(b)} = \TRUE \:$\forall\: b\in[A]$)}
    \STATE Set \textit{round\_sync} = \FALSE.
    \FOR{each $b \in [A]$}
    \STATE $(m_b, \cB_b)$ = \textbf{Eliminate}($b, m_b,\cB_{b}, \hat{\textbf{f}}, \textbf{T}, T$), $\cB^{def}_{b}=\cB_{b}$, \textit{end\_round(b)}=\FALSE, $\tdel{a}=\tdel{a}/2$.
    \ENDFOR
    \ENDIF

    \ELSE[\textit{Elimination - Independent Phase}]
    \FOR{(each $b \in [A]$ s.t. $end\_round(b)$=\TRUE)}
    \STATE $(m_b, \cB_{b})$ = \textbf{Eliminate}($b, m_b,\cB_{b}, \hat{\textbf{f}}, \textbf{T}, T$), $\cB^{def}_{b}=\cB_{b}$, \textit{end\_round(b)}=\FALSE, $\tdel{a}=\tdel{a}/2$.
    \ENDFOR
    \ENDIF

\ENDFOR
\end{algorithmic}
\label{alg:ucb-lp-new}
\end{algorithm}

In line 4 of the algorithm, at every round $t$, we select an activation set $a\in[A]$ with probability $p_a$ and we operate in round $m_a$ for set $\cK_{a}$.
In lines 5-8, we decide the phase of operation.
During initialization, we calculated $v_{\min}$, which is the global rate for sample accumulation of the base-arm that has the slowest accumulation rate when using $z^*$-sampling.
Hence, $1/v_{\min}$ gives an upper bound of the number of pulls for the slowest base-arm, and hence every base-arm, to get at least 1 sample.
It also naively bounds the regret, normalized by $\Delta n_m$, for ending round $m$ for all $\cK_a$ in the forced-sync phase.
Now, for the independent phase, even if we don't exploit the side information structure, we still use the free samples, generated naturally, to update the mean estimates and observation counts.
These have to be taken into account for a realistic estimate of the regret for this phase.

Hence, we calculate $R(\Bar{\cB})=\sum_{i \in \bigcup_a{\{\cB_a \in \bar{\cB}}\}} R_i(\bar{\cB})$, where $R_i(\bar{\cB})$ is given in Equation \ref{eq:cal_r}, where $P_{\dir}(i,\bar{\cB})$ and $P_{\free}(i,\bar{\cB})$ represents the probability of getting a sample for action $i$ in the current time step via directly choosing that action and getting a free side-information sample respectively.
Here $\bar{\cB}=\{\cB_1,...,\cB_A\}$, i.e., the set of set of active actions for all $\cK_a$.
\begin{align}
    &R_i(\bar{\cB}) = \frac{P_{\text{dir}}(i,\bar{\cB})}{P_{\text{dir}}(i,\bar{\cB}) + P_{\text{free}}(i,\bar{\cB})}, \text{with } P_{\text{dir}}(i,\bar{\cB}) = \sum_{a : i \in \cB_a} \frac{p_a}{|\cB_a|}, \nonumber \\
    &\text{and } P_{\text{free}}(i,\bar{\cB}) = \sum_{a=1}^{A} \sum_{j \in \cB_a \setminus \{i\}} \frac{p_a}{|\cB_a|} \cdot \mathds{1}\left(\cF_i \subseteq \cC_j\right). \label{eq:cal_r}
\end{align}
$R$ is the effective number of pulls for active actions to ensure 1 sample for each active action for all $\cK_a$.
Using $R$, we can upper bound the regret to complete round $m_{a}$ for all $\cK_a$ by $2\tdel{a}R$ (more details in proof of Theorem \ref{thm:new_ucb_lp_result} in Appendix \ref{app:proof_thm_1}).
Hence, when $\left(\frac{1}{v_{\min}} \le 2\tdel{a}R\right)$, we operate under forced-sync phase with \textit{round\_sync}=\textbf{true} and otherwise choose independent phase with \textit{round\_sync}=\textbf{false}.

Lines 11-33 form the core of the algorithm.
In lines 13-14, we pull the only remaining action for the given $\cK_a$ if $\cB_a$ is a singleton set.
Else, in 15-18, we operate under the forced-sync phase, where after choosing a particular action $k\in\cK_a$ using $z^*$-sampling, we get samples for base-arms $j\in\cC_k$.
We stich together the samples collected earlier and now via side information to generate a sample, if possible, for actions and update their observation count. 
If all the active actions for any set $\cK_b$ has met the requirement to get to $n(m)$ samples, we mark those sets as done by \textit{end\_round}=\textbf{true}.
In lines 26-30, if each $\cK_a$ has met the sample requirement, we perform the elimination step using the \textbf{Eliminate} subroutine (Algorithm \ref{alg:action-elimination}).
Similarly, lines 20-24 deal with using an independent phase where we maintain a set $\cB^{def}_a \subseteq \cB_a$, which consists of the active arms that have a \textit{deficit} to get to $n(m)$ samples.
When we pull an action $k\in\cB^{def}_a$, we observe the rewards for its base-arms and possibly generate a free side information sample for other actions as we did for the forced-sync phase.
As as action $j$ gets to $n(m_b)$ sample for any set $\cK_b$, we remove it from the corresponding $\cB^{def}_b$.
Lines 31-33 deal with the elimination step for this phase.
After every elimination step, we halve the current $\tdel{a}$ and increase round counter $m_a$ by 1.

\begin{algorithm}
\caption{Action Elimination Subroutine (Eliminate)}
\begin{algorithmic}[1]
    \STATE \textbf{Input:} Round $m$, set of active actions $\cB$, empirical means $\hat{\textbf{f}}$, observation count $\textbf{T}$, time horizon $T$.
    \STATE To get $\cB_{new}$, delete all actions $j$ in $\cB$ for which
    $$\hat{f}_j + \sqrt{\frac{\log(T \tilde{\Delta}_{m}^2)}{2 T_j}} < \max_{i \in \cB} \left\{ \hat{f}_i - \sqrt{\frac{\log(T \tilde{\Delta}_{m}^2)}{2 T_i}} \right\}$$
    \STATE $m_{new} = m + 1$ \:\COMMENT{\textit{increment the local round counter}}
    \RETURN $(m_{new}, \cB_{new})$.
\end{algorithmic}
\label{alg:action-elimination}
\end{algorithm}

\section{Theoretical Result}
\begin{assumption}\label{as:delta}
    $\Delta_{j,a} > \frac{2}{T},\:\forall j\in\cK_a,\:\forall a\in[A]$ if the time horizon to run the algorithm is $T$.
\end{assumption}

\begin{theorem}\label{thm:new_ucb_lp_result}
    For all action $j\in\cK_a,\forall a\in[A]$, define round $m_{j,a} := \min\{m \in \cM: \tilde{\Delta}_m <\Delta_{j,a}/2\}$, $\cG_{m,a} = \{j\in\cK_a:m_{j,a}\ge m\}$ and $\bar{\cG}_m=\{\cG_{m,1},...,\cG_{m,A}\}$.
    and $\bar{m} := \max \Bigl\{ m \in \cM: \frac{1}{v_{\min}} \le 2 \tdel{m} \sum_{j \in \bigcup_a\cG_{m,a}} 
    R_j(\bar{\cG}_m)\Bigl\}$
    where $v_{\min} = \min_{i\in\cN} \left(\sum_a p_a \sum_{j \in \cK_a \cap \cP_i} \frac{z^*_{j,a}}{Z^*_a}\right)$, $\cM= \{0,1,...,\left\lfloor \frac{1}{2}\log_2T \right\rfloor-1\}$, 
    Also, define set $\cD_a := \{ j\in\cU_a: m_{j,a} > \bar{m} \}$.
    Then, given Assumption \ref{as:delta} holds, the expected regret of the UCB-LP-A policy as given in Algorithm \ref{alg:ucb-lp-new}, denoted by $\bE[R(T)]$, is upper bounded by: 
\begin{align*}
    &\sum_{a=1}^A \Biggr[ \sum_{j \in \cU_a \setminus \cD_a}\gamma_{j,a} \left(\frac{2\log(T\tdel{\bar{m}}^2)}{\tdel{\bar{m}}^2} \right) \Delta_{j,a} +  \sum_{j \in \cD_a}  \Biggl\{ (\gamma_{j,a}-1).\quad\\
    &\left( \frac{2 \log(T \tilde{\Delta}_{\bar{m}}^2)}{\tdel{\bar{m}}^2} \right) + \left( \frac{2 \log(T \tilde{\Delta}_{m_{(j,a)}}^2)}{\tilde{\Delta}_{m_{(j,a)}}^2} \right) \Biggl\} \Delta_{j,a} \Biggl]\:\: +\:\: O(K)
\end{align*}
where $\gamma_{j,a} = \left(\dfrac{p_az^*_{j,a}Z^*_{\max}}{Z^*_a}\right)$ with $Z^*_a = \sum_{j\in\cK_a} z^*_{j,a}$ and $Z^*_{\max} = \max_a Z^*_a$. The $O(K)$ term captures constants independent of time and is upper bounded by (where $\Delta_a^{\min}:= \min_j \del{j,a}$):
\begin{align*}
    O(K) \le& \sum_{a=1}^A \sum_{j \in \cU_a} (\gamma_{j,a} + 1) \Delta_{(j,a)}\quad +\\ 
    \left(\max_{j,a} \Delta_{j,a}\right)& \left[ \sum_{a=1}^A \frac{32\:|\cU_a|}{3(\Delta_a^{\min})^2} + \sum_{a=1}^A \sum_{j \in \cU_a} \frac{32}{\Delta_{(j,a)}^2} \right]
\end{align*}
\end{theorem}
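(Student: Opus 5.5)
We follow the improved-UCB analysis of \cite{auer2010ucb} as adapted in UCB-LP \cite{buccapatnam2018reward}. The regret is decomposed into a ``good event'' part and a ``failure'' part, using the representation $R(T)=\sum_a\sum_{j\in\cU_a}\Delta_{j,a}\E[N_{j,a}(T)]$ from (\ref{eq:regret_def}).

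\textbf{Good event.} For each $a$ and round $m$, define the event that every action in $\cB_a$ with $T_j\ge n(m)$ has $|\hat f_j-\mu_j|\le \sqrt{\log(T\tdel{m}^2)/(2T_j)}$. On this event two facts follow from the elimination rule in Algorithm \ref{alg:action-elimination} and the choice $n(m)=\lceil 2\log(T\tdel{m}^2)/\tdel{m}^2\rceil$:
\begin{itemize}
\item[(i)] the optimal action $i^*_a$ is never eliminated;
\item[(ii)] every suboptimal $j\in\cU_a$ is eliminated by the end of round $m_{j,a}$, since $\tdel{m_{j,a}}<\Delta_{j,a}/2$.
\end{itemize}
Hence at the start of round $m$ the surviving set satisfies $\cB_a\subseteq\cG_{m,a}$. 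The phase test in Algorithm \ref{alg:ucb-lp-new} is monotone, because $2\tdel{m}R$ shrinks with $m$. The policy is therefore in the forced-sync phase for rounds $m\le\bar m$ and in the independent phase afterwards; this is exactly how $\bar m$ is defined in the statement, after replacing $\bar\cB$ by the dominating $\bar\cG_m$.

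\textbf{Sync-phase contribution.} Under $z^*$-sampling, a pull in set $a$ selects $j$ with probability $z^*_{j,a}/Z^*_a$. LP feasibility, via the first constraint of $P_1$, gives every base-arm an accumulation rate of at least $v_{\min}$ per time step. Completing all rounds up to $\bar m$ for all sets therefore takes in expectation about $n(\bar m)$ samples per base-arm, within roughly $Z^*_{\max}\,n(\bar m)$ time steps; rescaling by $Z^*_{\max}$ makes the rate bounded below. During these steps, action $j$ in set $a$ is pulled an expected $p_a (z^*_{j,a}/Z^*_a)Z^*_{\max}\,n(\bar m)=\gamma_{j,a}n(\bar m)$ times. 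Multiplying by $\Delta_{j,a}$ gives the first sum, for $j\notin\cD_a$, which are eliminated before the phase ends. For $j\in\cD_a$ the same count appears, minus the $n(\bar m)$ samples that are inherited by the later phase, giving the $(\gamma_{j,a}-1)n(\bar m)$ term. Rounding and the ceiling in $n(\cdot)$ contribute the $(\gamma_{j,a}+1)\Delta_{j,a}$ constants.

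\textbf{Independent-phase contribution.} For $j\in\cD_a$, uniform deficit-driven sampling pulls $j$ directly at most until $T_j$ reaches $n(m_{j,a})$, at which point it is eliminated on the good event. Free side samples only reduce this count. This gives the $n(m_{j,a})\Delta_{j,a}$ term.

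\textbf{Failure events.} We bound $\bP$ of a confidence violation at round $m$ by Hoeffding's inequality, which gives order $1/(T\tdel{m}^2)$. This covers two cases. First, if $j$ survives past $m_{j,a}$, the cost is at most $T\Delta_{j,a}$ times that probability; summing over $m$ gives $O(1/\Delta_{j,a}^2)$, i.e.\ the $32/\Delta_{j,a}^2$ term. Second, if $i^*_a$ is eliminated at some round $m$, a union bound over $\cU_a$ and over rounds with $\tdel{m}\ge\Delta^{\min}_a/2$ yields the $32|\cU_a|/(3(\Delta^{\min}_a)^2)$ term. Both are multiplied by $\max\Delta_{j,a}$. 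Assumption \ref{as:delta} ensures $m_{j,a}\in\cM$, so that every suboptimal action has a finite elimination round within the horizon. Once $m_a$ reaches the cap $\lfloor\frac12\log_2T\rfloor$, only the empirical best action is played, and by the same bounds this adds no extra order.

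\textbf{Main obstacle.} The hardest step is the sync-phase accounting. Samples for set $a$ come from pulls made while other sets were active, so the coupled round-completion time must be controlled across sets. We would first try a Wald-type argument on the number of time steps until all base-arms reach $n(\bar m)$, using the rate $v_{\min}$. A second difficulty is justifying that the phase switch uses $\bar\cG_m$ in place of the random $\bar\cB$; we would handle this through the monotonicity of $R_i$ in the surviving sets, restricted to the good event.
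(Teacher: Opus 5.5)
Your decomposition matches the paper's: Hoeffding-controlled good and bad events, a sync count $\gamma_{j,a}n(\bar m)$ obtained from $v_i\ge 1/Z^*_{\max}$, an independent-phase count capped by $n(m_{j,a})$, and $T\max_{j,a}\Delta_{j,a}\cdot\bP(E^c)$ for the constant terms. The step that fails is identifying the sync rounds with $\{0,\dots,\bar m\}$.

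You assert that the phase test is monotone because ``$2\tdel{m}R$ shrinks with $m$,'' and you plan to justify replacing $\bar\cB$ by $\bar\cG_m$ through monotonicity of $R_i$. Neither $R_i(\bar\cB)$ nor $R(\bar\cB)$ is monotone under inclusion, because eliminating an action that supplies free samples can raise $R_i$ for every action it fed. Here is a concrete case:
\begin{itemize}
\item Setup: one activation set with $\cF_x=\{x\}$ for all actions. Three hubs each observe their own base-arm and those of thirty leaves; leaves observe only themselves.
\item LP: $z^*$ puts weight $1$ on each hub and $0$ on the leaves, so $1/v_{\min}=3$.
\item $R$: with all $33$ actions active, $R=3+30\cdot\frac{1}{4}=10.5$. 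Once the hubs are gone, $R=30$.
\item Phases: $2\tdel{3}\cdot 10.5=2.625<3\le 3.75=2\tdel{4}\cdot 30$. If the hubs are eliminated exactly at the end of round $3$, rounds $0$--$2$ are sync, round $3$ is independent, and round $4$ is sync again. This happens because the test is re-evaluated at every step while \textit{round\_sync} is false.
\item Consequence: with hub gaps in $(1/4,1/2]$ and suboptimal-leaf gaps at most $1/4$, you get $\bar m\ge 4$. Yet the stated bound charges round $3$ at the sync rate $\gamma_{j,a}\Delta n_3$. That rate is $0$ for the leaves, even though they may be pulled directly in that round.
\end{itemize}
Likewise, on $E$ you only know $\cB_{m,a}\subseteq\cG_{m,a}$, and $R(\bar\cB_m)$ can lie on either side of $R(\bar\cG_m)$, so $\bar\cG_m$ does not dominate anything useful. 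The paper does not close this gap either. It simply asserts that, given $E$, $\bar m$ is the last forced-sync round, so following it leaves the same hole. A sound argument needs one of two things: a one-way switch, with the bound stated through the realized switching round; or a per-round charge of $\max(\gamma_{j,a},1)\Delta n_m$, which is weaker than the theorem.

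On your main obstacle, Wald's identity points the wrong way. For a fixed base-arm $i$ it gives $\E[S_i(\tau)]=v_i\E[\tau]$. Since $S_i(\tau)\ge\Delta n_m$ for every $i$, this only yields $\E[\tau]\ge\Delta n_m/v_{\min}$. The round length is the maximum of the per-base-arm hitting times, so an upper bound needs a concentration argument for that maximum. That argument picks up a deviation term beyond $\Delta n_m/v_{\min}$, which the inequality $v_{\min}\ge 1/Z^*_{\max}$ need not absorb. With one activation set and small $\epsilon$ that inequality is an equality, since some coverage constraint is active at the optimum. The paper avoids this only by writing $\E[T_m]=\Delta n_m/\min_i v_i$ as an identity.

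Two smaller bookkeeping points:
\begin{itemize}
\item The $-n(\bar m)$ belongs to the independent phase, not the sync phase. At the switch $T_j\ge n(\bar m)$, so at most $n(m_{j,a})-n(\bar m)$ direct pulls remain, while the sync count stays $\gamma_{j,a}n(\bar m)$.
\item The constant $32/\Delta_{j,a}^2$ comes from using only the round-$m_{j,a}$ failure probability $2/(T\tdel{m_{j,a}}^2)$, as in the paper's event $E_2$. Summing over $m\le m_{j,a}$ instead gives $128/(3\Delta_{j,a}^2)$.
\end{itemize}
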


\begin{proof}(Sketch)
    We calculate the conditional regret under good and bad event, where the good event is the case where, with high probability, each suboptimal action $j$ for each set $\cK_a$, is eliminated on or before round $m_{j,a}$. Some actions observe only the forced-sync phase, while some observe both the phases, and we can calculate regret contributions accordingly.
    See Appendix \ref{app:proof_thm_1} for full proof.
\end{proof}

\begin{theorem}(Baseline)\label{thm:baseline_result}
    For any action $j\in\cK_a$  for any $a\in [A]$, define round $m_{j,a} := \min\{m \in \cM: \tilde{\Delta}_m <\Delta_{j,a}/2\}$, where $\cM= \{0,1,...,\left\lfloor \frac{1}{2}\log_2T \right\rfloor-1\}$.
    Then, given Assumption \ref{as:delta} holds, the expected regret of the baseline UCB policy, denoted by $\bE[R(T)]$, is upper bounded by: 
\begin{align*}
     \sum_{a=1}^A \sum_{j \in \cU_a} \left( \frac{2 \log(T \tilde{\Delta}_{m_{(j,a)}}^2)}{\tilde{\Delta}_{m_{(j,a)}}^2} \right) \Delta_{j,a} + O(K)
\end{align*}
The $O(K)$ term captures constants independent of time and is upper bounded by (where $\Delta_a^{\min}:= \min_j \del{j,a}$):
\begin{align*}
    O(K) \le& \sum_{a=1}^A \sum_{j \in \cU_a}\Delta_{j,a} +\\
    \left(\max_{j,a} \Delta_{j,a}\right) &\left[ \sum_{a=1}^A \frac{32\:|\cU_a|}{3(\Delta_a^{\min})^2} + \sum_{a=1}^A \sum_{j \in \cU_a } \frac{32}{\Delta_{j,a}^2} \right]
\end{align*}
\end{theorem}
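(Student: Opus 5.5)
The statement to prove is Theorem~\ref{thm:baseline_result}: the baseline policy runs the improved-UCB elimination scheme of \cite{auer2010ucb} separately inside each activation set $\cK_a$ and ignores side information. I will therefore run the standard improved-UCB analysis per set and add the regret over $a\in[A]$.

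\textbf{Step 1: reduce to independent elimination processes.} Condition on the sequence of realized activation sets. For each $a$, the steps at which $\cK_a$ is active form a sub-sequence, and on it the baseline behaves exactly like improved UCB on the arm set $\cK_a$ with round counter $m_a$. Its budget is at most $T$ pulls, and the round counter is capped at $\lfloor\frac12\log_2 T\rfloor$ as in $\cM$. The regret decomposition (\ref{eq:regret_def}) splits as $\sum_a\sum_{j\in\cU_a}\Delta_{j,a}\E[N_{j,a}(T)]$. It therefore suffices to bound $\E[N_{j,a}(T)]$ for each pair $(j,a)$.

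\textbf{Step 2: the good event.} For round $m$ and arm $j$, the confidence radius after $n(m)$ samples is $\sqrt{\log(T\tdel{m}^2)/(2n(m))}\le \tdel{m}/2$. By Hoeffding, $\hat f_j$ lies outside its confidence interval with probability at most $1/(T\tdel{m}^2)$. On the good event, the optimal arm $i^*_a$ is never eliminated. Moreover, at the end of round $m_{j,a}$, where $\tdel{m_{j,a}}<\Delta_{j,a}/2$, the UCB of $j$ is at most $\mu_j+2\tdel{m}<\mu^*_a-2\tdel{m}$, which is at most the LCB of $i^*_a$. Hence $j$ is eliminated, and on this event $N_{j,a}\le n(m_{j,a})\le \frac{2\log(T\tdel{m_{j,a}}^2)}{\tdel{m_{j,a}}^2}+1$. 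The $+1$ from the ceiling produces the term $\sum_a\sum_j\Delta_{j,a}$ in $O(K)$. Assumption~\ref{as:delta} ensures that $m_{j,a}\in\cM$, i.e.\ that $\tdel{m}\ge 1/\sqrt T$ is reachable, so that $\log(T\tdel{m}^2)\ge0$.

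\textbf{Step 3: bad events.} There are two failure modes, handled as in Auer--Ortner.

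\emph{(a) Arm $j$ survives round $m_{j,a}$ while $i^*_a$ is still active.} The probability is at most $2/(T\tdel{m_{j,a}}^2)$, and the regret is at most $T\Delta_{j,a}$. Since $\tdel{m_{j,a}}\ge\Delta_{j,a}/4$, this contributes at most $32/\Delta_{j,a}$. Bounding $\Delta_{j,a}\le\max\Delta$ gives the $\sum 32/\Delta_{j,a}^2$ term.

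\emph{(b) $i^*_a$ is eliminated in some round $m$.} This requires a confidence failure, of probability at most $2/(T\tdel{m}^2)$, and it only happens while some arm $j$ with $m_{j,a}\ge m$ remains. Summing over rounds $m\le m_{j,a}$ and over the arms $j$ that could cause the elimination, the regret after such an event is at most $T\max\Delta$. A geometric sum over $m$ of $\tdel{m}^{-2}=4^m$ is bounded by $\frac43 4^{m_{\max}}\le \frac{4}{3}\cdot\frac{16}{(\Delta_a^{\min})^2}$. This gives $\sum_a 32|\cU_a|/(3(\Delta_a^{\min})^2)$ times $\max\Delta$.

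\textbf{Step 4: main obstacle.} The delicate part is Step 3(b). One must charge the event that the optimal arm is eliminated to the rounds before a fixed suboptimal arm's $m_{j,a}$ without double counting. The geometric sum must also telescope to exactly the stated constant $32/3$, and I will mirror Auer--Ortner's counting for this. A second delicacy is that the per-set pull budget is random, with expectation $p_aT$. The bounds above use $T$ as a crude upper bound, so no concentration over set arrivals is needed.
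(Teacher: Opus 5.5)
Your outline matches the paper's argument. On the good event each $j\in\cU_a$ is pulled at most $n(m_{j,a})$ times, and the ceiling's $+1$ gives $\sum_a\sum_j\Delta_{j,a}$. The two failure events are bounded by the per-round Hoeffding estimates $2/(T\tilde{\Delta}_m^2)$ and charged regret at most $T\max_{j,a}\Delta_{j,a}$. Your per-arm charge $T\Delta_{j,a}$ in 3(a) is valid and slightly sharper. However, Step 3(b) does not deliver the constant you claim. You use probability $2/(T\tilde{\Delta}_m^2)$ per pair $(j,m)$, a sum over $m\le m_a^{\max}$ bounded by $\frac43 4^{m_a^{\max}}\le\frac43\cdot 16/(\Delta_a^{\min})^2$, and regret $T\max_{j,a}\Delta_{j,a}$. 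That yields $\frac{128}{3}|\cU_a|\max_{j,a}\Delta_{j,a}/(\Delta_a^{\min})^2$, four times the stated term.

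The paper's fix (event $E_1$) is to take the union bound only over rounds $m=0,\dots,m_a^{\max}-1$ and all $j\in\cU_a$, using $\sum_{m=0}^{M-1}4^m<4^M/3$. This gives exactly $\frac{2|\cU_a|}{T}\cdot\frac13\cdot\frac{16}{(\Delta_a^{\min})^2}=\frac{32|\cU_a|}{3T(\Delta_a^{\min})^2}$. Dropping round $m_a^{\max}$ is legitimate. If $i^*_a$ were eliminated in that round, the arm attaining the maximal LCB is a suboptimal arm that survives a round at or after its own $m_{k,a}$ while $i^*_a$ is still active, which is already one of your (a)-events. Your fallback of mirroring Auer--Ortner's counting would not recover the constant. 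Their case bounds the regret after losing the optimal arm by $T\cdot 4\tilde{\Delta}_m$ and produces a term $\sum_j 64/\Delta_{j,a}$, which does not imply the stated bound (take all gaps equal).

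There is also a smaller slip in Step 2. The chain $\mu_j+2\tilde{\Delta}_m<\mu_a^*-2\tilde{\Delta}_m$ would require $\Delta_{j,a}>4\tilde{\Delta}_{m_{j,a}}$, which is false because $\tilde{\Delta}_{m_{j,a}}\ge\Delta_{j,a}/4$. With radius $W\le\tilde{\Delta}_m/2$ you actually have $\ucb_j\le\mu_j+2W\le\mu_j+\tilde{\Delta}_m$ and $\lcb_{i^*_a}\ge\mu_a^*-\tilde{\Delta}_m$. This needs only $\Delta_{j,a}>2\tilde{\Delta}_{m_{j,a}}$, as in the paper's Lemma~\ref{lemma:j_not_elim_m_j}.
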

\begin{proof}(Sketch)
    Similar proof as Theorem \ref{thm:new_ucb_lp_result} with the same good and bad events, but we do not have the forced-sync phase and simply calculate for the independent phase. See Appendix \ref{app:proof_thm_2} for full proof.
\end{proof}

Theorem \ref{thm:new_ucb_lp_result} is the main theoretical result of the paper where we present the upper bound of the expected regret for the proposed UCB-LP-A policy.
Theorem \ref{thm:baseline_result} presents the upper bound for the baseline algorithm, where we apply the UCB with elimination for each set independently and not exploit the side-information structure.
In Theorem \ref{thm:new_ucb_lp_result}, we can observe that the actions in $\cU_a\setminus\cD_a$ and $\cD_a$ contribute differently towards regret. 
The actions in $\cD_a$ are the ones that experience both the forced-sync and independent phase, while actions in $\cU_a\setminus\cD_a$ experience only the forced-sync phase.

\begin{figure*}[h!] 
    \centering
    \begin{subfigure}[b]{0.32\linewidth}
        \centering
        \includegraphics[width=\linewidth]{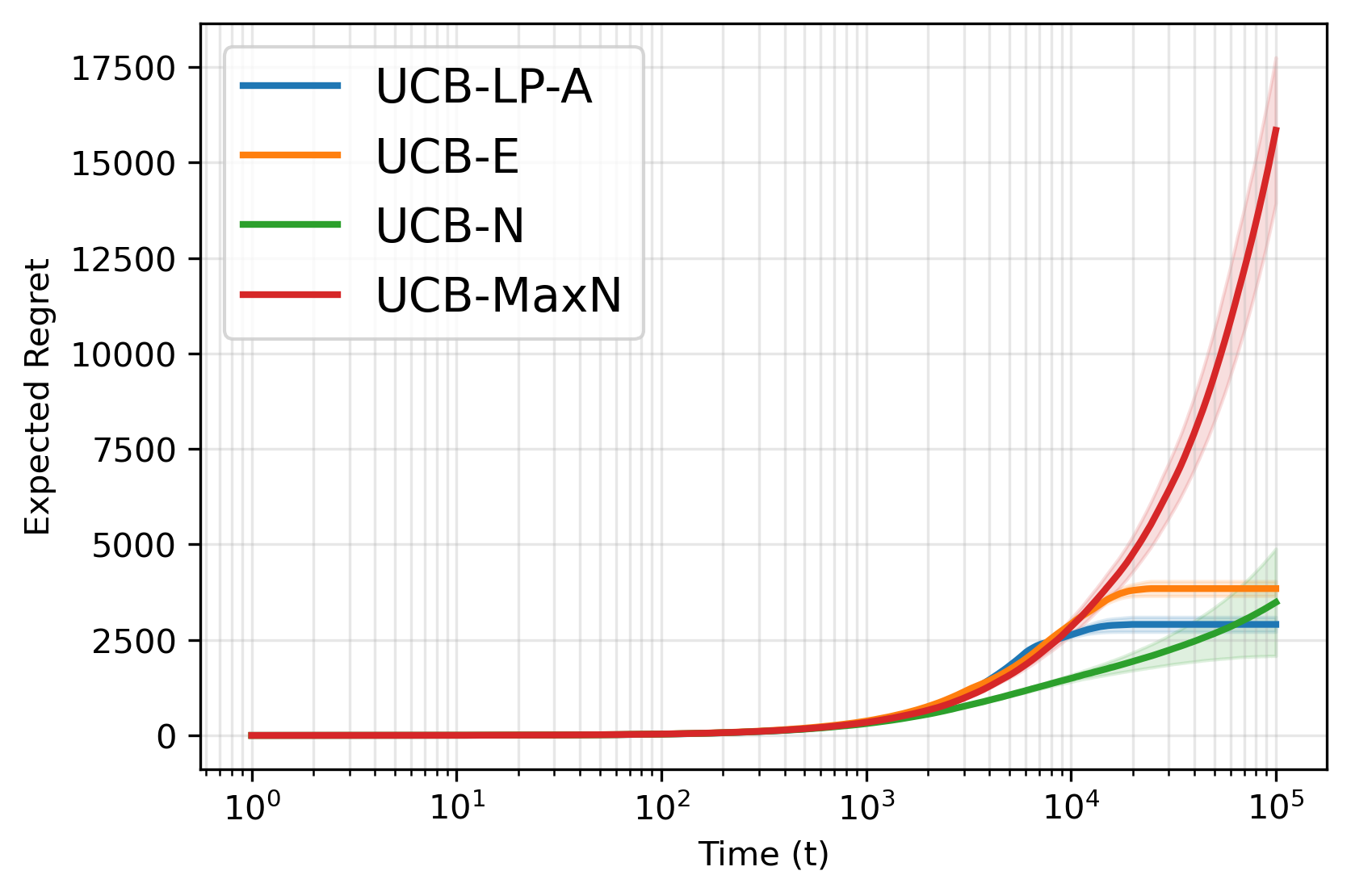}
        \caption{K=100, Activation Sets=2, Optimal=5.}
        \label{fig:sim_1}
    \end{subfigure}
    \hfill 
    \begin{subfigure}[b]{0.32\linewidth}
        \centering
        \includegraphics[width=\linewidth]{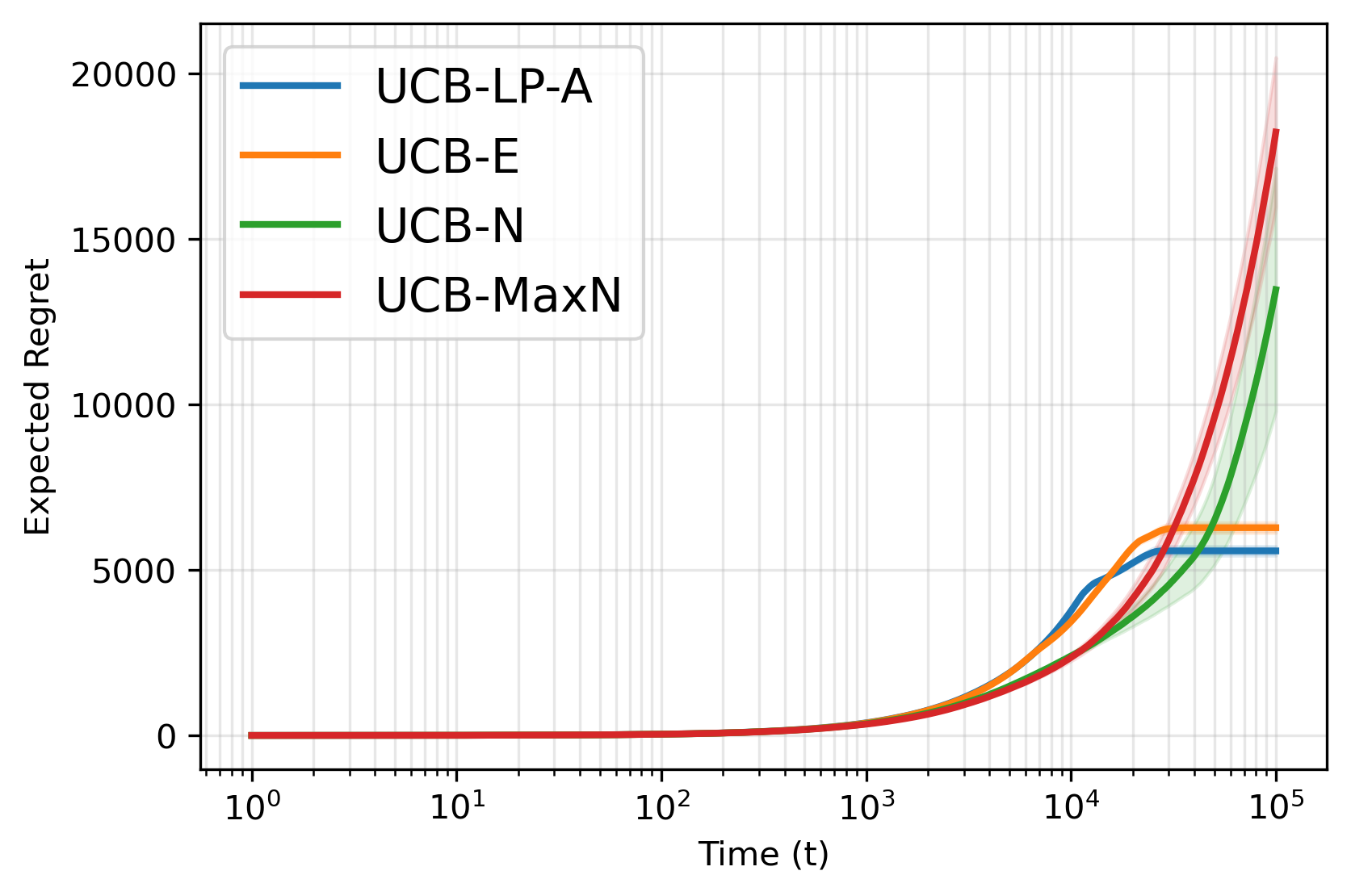}
        \caption{K=200, Activation Sets=3, Optimal=10.}
        \label{fig:sim_2}
    \end{subfigure}
    \hfill
    \begin{subfigure}[b]{0.32\linewidth}
        \centering
        \includegraphics[width=\linewidth]{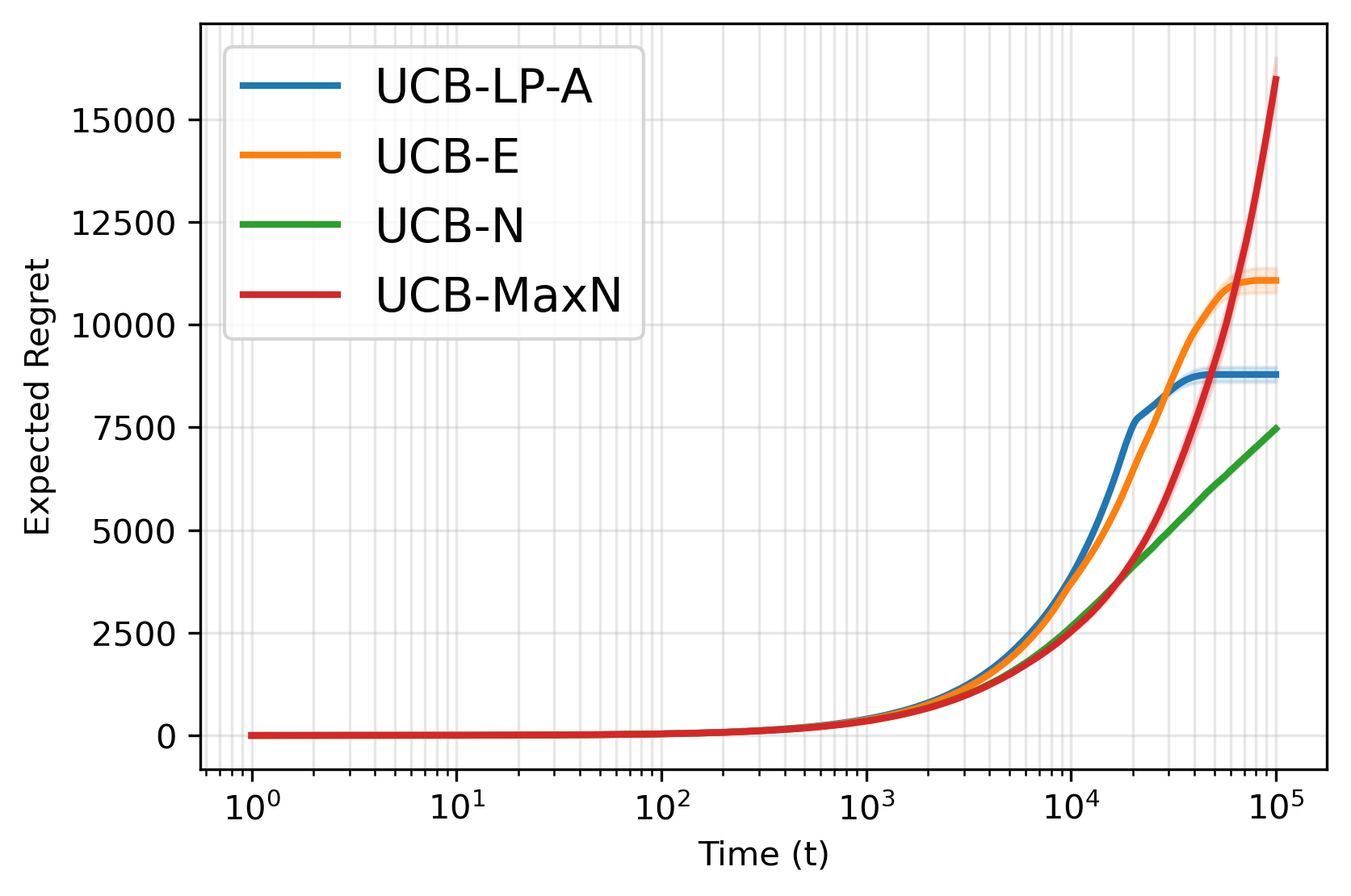}
        \caption{K=300, Activation Sets=5, Optimal=20.}
        \label{fig:sim_3}
    \end{subfigure}
    \caption{Regret comparison for a simulated social network generated using Barabási–Albert (BA) model with $m=3$.}
    \label{fig:sim_social}
\end{figure*}

Observe that $\gamma_{j,a}$ takes into account both the set probabilities and the optimal solution $z^*_{j,a}$. 
Observe that for actions $j\in\cD_a$, the $\left(\dfrac{2 \log(T \tilde{\Delta}_{m_{(j,a)}}^2)}{\tilde{\Delta}_{m_{(j,a)}}^2} \Delta_{j,a} \right)$ term in the second inner summation in Theorem \ref{thm:new_ucb_lp_result} is the same as the term in Theorem \ref{thm:baseline_result} for corresponding $j\in\cU_a$.
This is the maximum contribution of regret for this action $j$ when not using any side-information structure.
Hence, the term with $(\gamma_{j,a}-1)$ would not appear if not using $z^*$-sampling.
For cases with great side-information structure, $(\gamma_{j,a}-1)<0$, which means that the term with $(\gamma_{j,a}-1)$ only potentially reduces the coefficient of regret for actions that have survived the forced-sync phase.
Hence, this shows the benefit of exploiting the side-information structure.


\section{Numerical Results}

We benchmark UCB-LP-A against three baselines: UCB with elimination (UCB-E) \cite{auer2010ucb}, UCB-N, and UCB-MaxN \cite{caron2012leveraging}. 
We adapt these policies to the current setting by applying them to each activation set while updating observation counts and mean estimates using side-information. 
The adaptation for UCB-E and UCB-N is direct. 
UCB-MaxN, however, requires a constrained selection rule. 
It targets the arm $i$ with the highest UCB index but plays the best neighbor $j$ found in the intersection of $i$'s neighborhood and the active set. 
In the results, solid curves denote mean cumulative regret over 20 independent trials, and shaded regions indicate 95\% confidence intervals. $\epsilon=10^{-5}$ is used across experiments.



\subsection{Social Network}


The UCB-LP-A framework applies directly to social network targeting. 
Here, activation sets represent the cohort of users online at any given instance, and the action set equals the set of base-arms, both being users. 
Rewards are Bernoulli distributed with mean $\mu_i$, representing the probability of accepting a promotion. Selecting a user reveals their own outcome along with side-observations from all 1-hop neighbors. By defining $\cF_j=\{j\}$ and $f_j(\textbf{X}_j(t))=X_j(t)$, the reward structure ensures that cumulative regret is driven solely by the response of the targeted user.

\subsubsection{Simulated Social Network}

To evaluate performance in a realistic setting, we simulate a social network using the Barabási–Albert (BA) model \cite{barabasi1999emergence}, which generates scale-free networks with a power-law degree distribution ($P(k) \sim k^{-3}$). 
This topology captures real-world preferential attachment dynamics, allowing us to test how algorithms leverage side-information from both `hubs' and peripheral users. 
We generated networks of 100, 200, and 300 users with attachment parameter $m=3$ (Figure \ref{fig:sim_1}). 
In each scenario, a subset of users (denoted as ``Optimal=X'' in captions) was assigned a high mean reward $\mu_i=0.9$, while others were drawn uniformly from $[0.3, 0.7]$. 
Activation sets were sampled uniformly ($\textbf{p}$ is uniform).

Figures \ref{fig:sim_1} through \ref{fig:sim_3} illustrate the cumulative regret. 
In all cases, UCB-LP-A and UCB-E reach a plateau, indicating they successfully identified the optimal actions, whereas UCB-N and UCB-MaxN continue to incur regret. 
Notably, UCB-MaxN performs worse than UCB-N. 
This is the opposite of the findings in \cite{caron2012leveraging}. 
The degradation arises because UCB-MaxN's neighbor selection is constrained by the activation sets; the highest-UCB neighbor is frequently inactive. 
Crucially, UCB-LP-A consistently outperforms UCB-E, confirming that our LP-based policy effectively exploits side-information to accelerate learning beyond standard elimination strategies.

\begin{figure}[h!]
    \centering
    
    \begin{subfigure}[b]{\linewidth} 
        \centering
        \includegraphics[width=\linewidth]{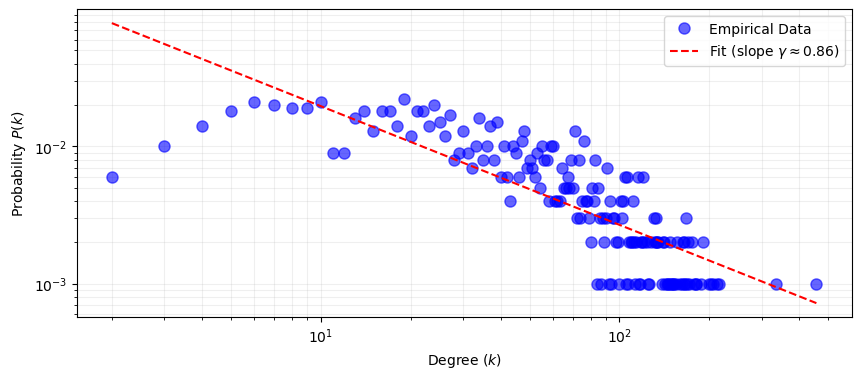}
        \caption{log-log plot for degree distribution of subgraph.}
        \label{fig:facebook_1}
    \end{subfigure}
    \par
    \begin{subfigure}[b]{\linewidth}
        \centering
        \includegraphics[width=\linewidth]{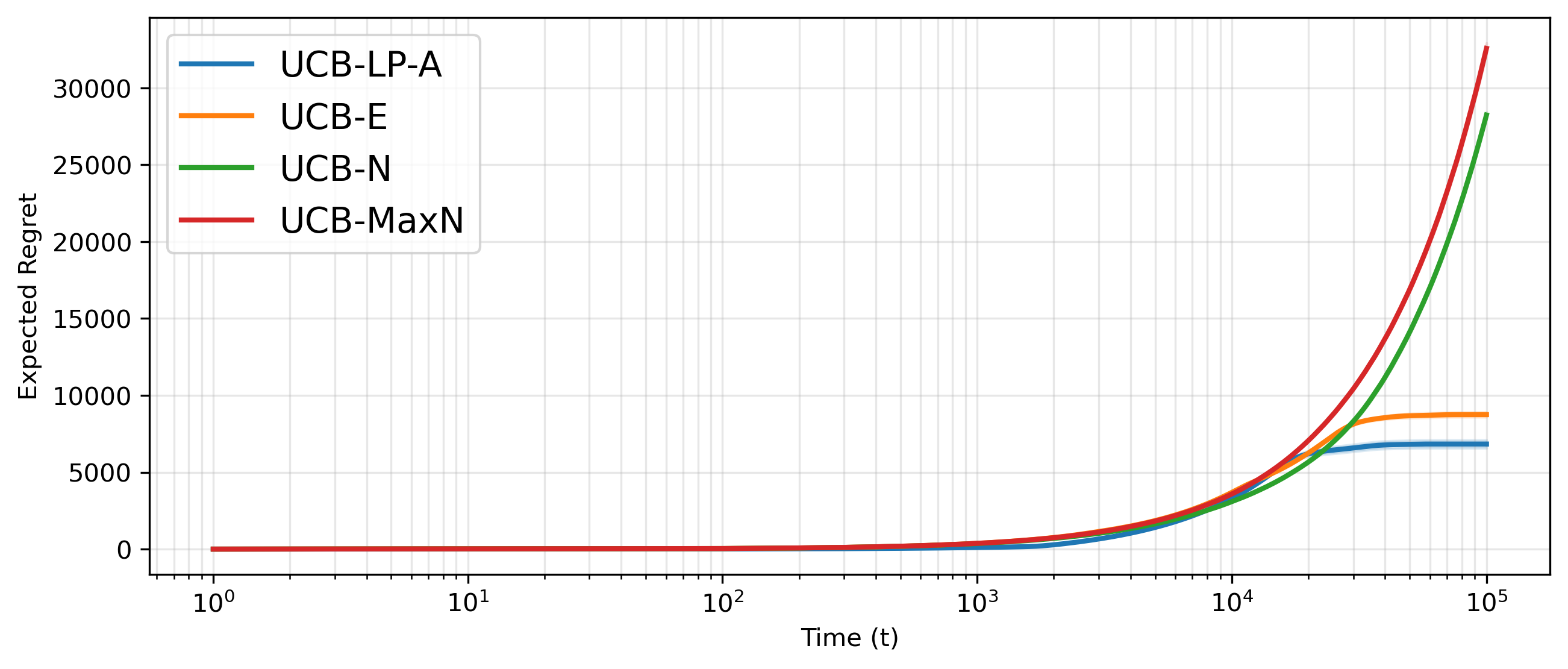}
        \caption{Regret comparison of all the policies.}
        \label{fig:facebook_2}
    \end{subfigure}
    \caption{Regret comparison for 1000 user subgraph from  Facebook Dataset.}
    \label{fig:facebook}
\end{figure}

\subsubsection{Facebook Dataset}

To validate performance on real-world topology, we employ the \textit{ego-Facebook} dataset from SNAP \cite{snapnets}. 
We extracted a 1000-node subgraph via random walk sampling to maintain structural integrity. 
Figure \ref{fig:facebook_1} confirms that the subgraph retains a power-law degree distribution, evidenced by the near-linear log-log plot. 
We partitioned users into 10 equiprobable, disjoint activation sets. 
Rewards were assigned with $\mu_i=0.9$ for 50 optimal users and $\mu_i \sim \mathcal{U}[0.3, 0.7]$ for the remaining 950. 
Figure \ref{fig:facebook_2} demonstrates that UCB-LP-A and UCB-E plateau, indicating successful identification of optimal users. 
Conversely, UCB-N and UCB-MaxN continue to accumulate regret. Consistent with the synthetic simulations, UCB-LP-A yields the superior performance.




\begin{figure}[h!]
    \centering
    \begin{subfigure}[b]{\linewidth} 
        \centering
        \includegraphics[width=0.65\linewidth]{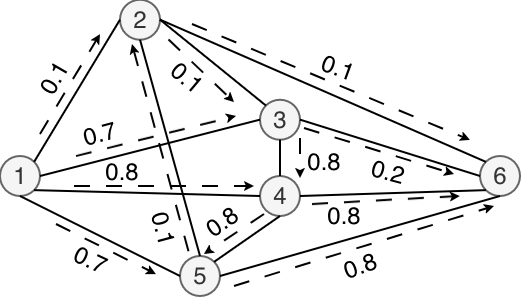}
        \caption{Example network for routing problem.}
        \label{fig:comm_1}
    \end{subfigure}
    \par
    \begin{subfigure}[b]{\linewidth}
        \centering
        \includegraphics[width=\linewidth]{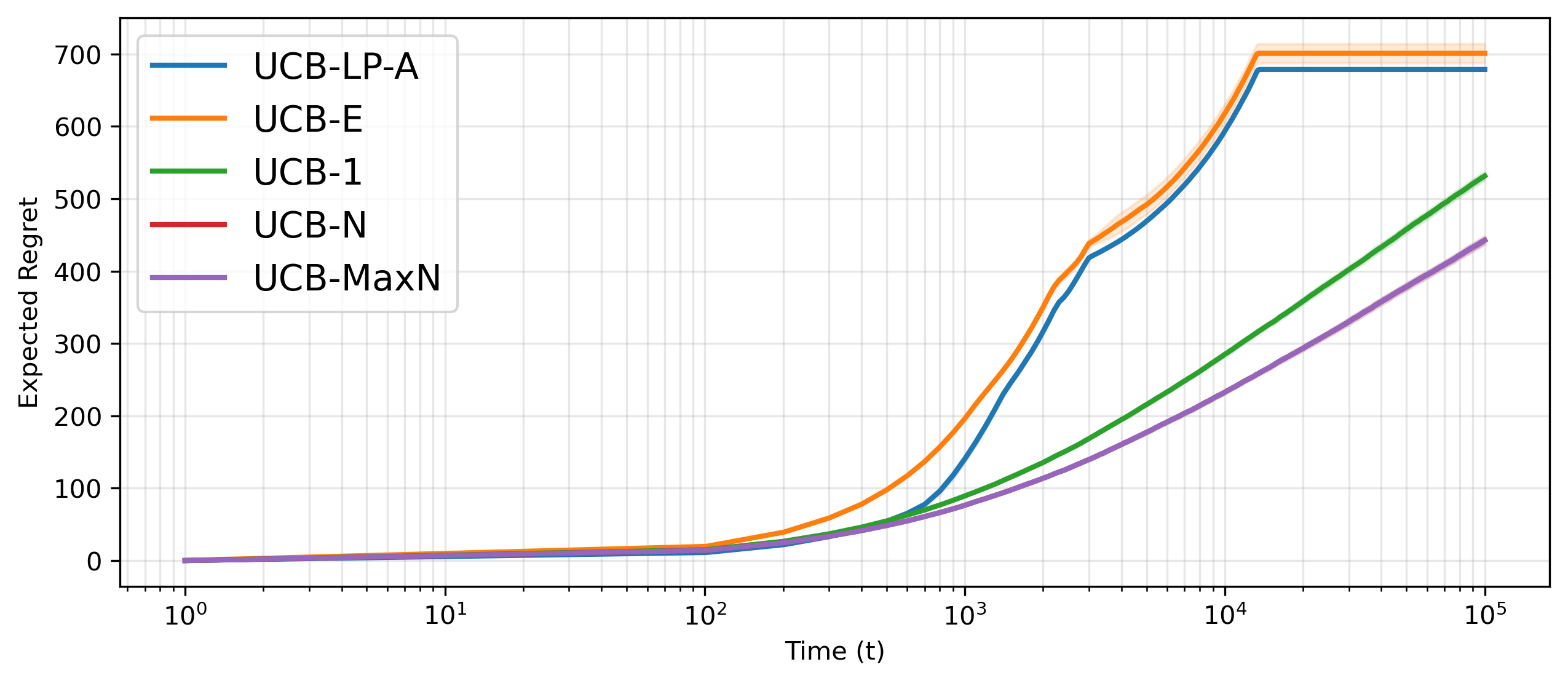}
        \caption{Regret comparison of all policies.}
        \label{fig:comm_2}
    \end{subfigure}
    \caption{Example routing problem and regret comparison.}
    \label{fig:vertical_stack}
\end{figure}

\subsection{Communication Network Example}

We simulate a routing problem on a 6-node network with 12 directed links and 8 paths from source (node 1) to destination (node 6), as detailed in Figure \ref{fig:comm_1}. 
Links represent base-arms ($N=12$), and paths represent actions ($K=8$). 
Selecting a path reveals individual link delays, offering side-information for overlapping paths. 
Two equiprobable activation sets are defined by distinct link failure scenarios: the inner cycle $\{(2,3), (3,4), (4,5), (5,2)\}$ fails in set 1, while the direct edges to the sink $\{(2,6), (3,6), (4,6), (5,6)\}$ fail in set 2. 
We define the reward as $f_j(\textbf{X}_j(t)) = 1 - \sum_{i\in\cC_j}X_i(t)/B$, with $B=5$ and Bernoulli link delays. 
Partial link observations are aggregated to construct valid side-samples for various paths whenever their full link set is observed. 
We additionally benchmark against UCB-1 \cite{auer2002finite}, which ignores side-information, to quantify the gain from structural exploitation.


Figure \ref{fig:comm_2} indicates that both UCB-LP-A and UCB-E plateau, confirming convergence to the optimal path per activation set. 
However, UCB-LP-A achieves significantly lower regret by effectively exploiting cross-set side-information. 
UCB-N and UCB-MaxN have almost overlapping curves since, in this case, the performance of UCB-MaxN degrades to that of UCB-N, since there is no non-trivial clique (clique with more than one element) in this problem.




\section{Conclusion and Future Work}
In this work, we presented a novel framework for Multi-Armed Bandits that integrates graph-based side-observations with stochastic action availability. 
We proposed the UCB-LP-A policy, demonstrating that an LP-based sampling approach can efficiently optimize exploration. 
As a preliminary study, our current analysis relies on the assumption of known activation sets and their occurrence probabilities. 
A critical direction for future work is to relax this constraint, extending the framework to settings where availability statistics are unknown. 
This would require developing algorithms capable of learning the underlying availability patterns online alongside the reward distributions, making the policy robust to fully dynamic and unknown environments.

\bibliographystyle{IEEEtran}
\bibliography{bibliography}

@article{auer2010ucb,
  title={UCB revisited: Improved regret bounds for the stochastic multi-armed bandit problem},
  author={Auer, Peter and Ortner, Ronald},
  journal={Periodica Mathematica Hungarica},
  volume={61},
  number={1-2},
  pages={55--65},
  year={2010},
  publisher={Akad{\'e}miai Kiad{\'o}, co-published with Springer Science+ Business Media BV~…}
}

@article{buccapatnam2018reward,
  title={Reward maximization under uncertainty: Leveraging side-observations on networks},
  author={Buccapatnam, Swapna and Liu, Fang and Eryilmaz, Atilla and Shroff, Ness B},
  journal={Journal of Machine Learning Research},
  volume={18},
  number={216},
  pages={1--34},
  year={2018}
}

@article{barabasi1999emergence,
  title={Emergence of scaling in random networks},
  author={Barab{\'a}si, Albert-L{\'a}szl{\'o} and Albert, R{\'e}ka},
  journal={science},
  volume={286},
  number={5439},
  pages={509--512},
  year={1999},
  publisher={American Association for the Advancement of Science}
}

@article{caron2012leveraging,
  title={Leveraging side observations in stochastic bandits},
  author={Caron, St{\'e}phane and Kveton, Branislav and Lelarge, Marc and Bhagat, Smriti},
  journal={arXiv preprint arXiv:1210.4839},
  year={2012}
}

@article{auer2002finite,
  title={Finite-time analysis of the multiarmed bandit problem},
  author={Auer, Peter and Cesa-Bianchi, Nicolo and Fischer, Paul},
  journal={Machine learning},
  volume={47},
  number={2},
  pages={235--256},
  year={2002},
  publisher={Springer}
}

@misc{snapnets,
  author       = {Jure Leskovec and Andrej Krevl},
  title        = {{SNAP Datasets}: {Stanford} Large Network Dataset Collection},
  howpublished = {\url{http://snap.stanford.edu/data}},
  month        = jun,
  year         = 2014
}

@article{lai1985asymptotically,
  title={Asymptotically efficient adaptive allocation rules},
  author={Lai, Tze Leung and Robbins, Herbert},
  journal={Advances in applied mathematics},
  volume={6},
  number={1},
  pages={4--22},
  year={1985},
  publisher={Academic Press}
}

@article{mannor2011bandits,
  title={From bandits to experts: On the value of side-observations},
  author={Mannor, Shie and Shamir, Ohad},
  journal={Advances in Neural Information Processing Systems},
  volume={24},
  year={2011}
}

@inproceedings{pandey2007multi,
  title={Multi-armed bandit problems with dependent arms},
  author={Pandey, Sandeep and Chakrabarti, Deepayan and Agarwal, Deepak},
  booktitle={Proceedings of the 24th international conference on Machine learning},
  pages={721--728},
  year={2007}
}

@article{bubeck2011x,
  title={X-Armed Bandits.},
  author={Bubeck, S{\'e}bastien and Munos, R{\'e}mi and Stoltz, Gilles and Szepesv{\'a}ri, Csaba},
  journal={Journal of Machine Learning Research},
  volume={12},
  number={5},
  year={2011}
}

@article{rusmevichientong2010linearly,
  title={Linearly parameterized bandits},
  author={Rusmevichientong, Paat and Tsitsiklis, John N},
  journal={Mathematics of Operations Research},
  volume={35},
  number={2},
  pages={395--411},
  year={2010},
  publisher={INFORMS}
}

@inproceedings{li2010contextual,
  title={A contextual-bandit approach to personalized news article recommendation},
  author={Li, Lihong and Chu, Wei and Langford, John and Schapire, Robert E},
  booktitle={Proceedings of the 19th international conference on World wide web},
  pages={661--670},
  year={2010}
}

@inproceedings{chen2013combinatorial,
  title={Combinatorial multi-armed bandit: General framework and applications},
  author={Chen, Wei and Wang, Yajun and Yuan, Yang},
  booktitle={International conference on machine learning},
  pages={151--159},
  year={2013},
  organization={PMLR}
}

@inproceedings{buccapatnam2014stochastic,
  title={Stochastic bandits with side observations on networks},
  author={Buccapatnam, Swapna and Eryilmaz, Atilla and Shroff, Ness B},
  booktitle={The 2014 ACM international conference on Measurement and modeling of computer systems},
  pages={289--300},
  year={2014}
}

@article{kleinberg2010regret,
  title={Regret bounds for sleeping experts and bandits},
  author={Kleinberg, Robert and Niculescu-Mizil, Alexandru and Sharma, Yogeshwer},
  journal={Machine learning},
  volume={80},
  number={2},
  pages={245--272},
  year={2010},
  publisher={Springer}
}

@inproceedings{kanade2009sleeping,
  title={Sleeping experts and bandits with stochastic action availability and adversarial rewards},
  author={Kanade, Varun and McMahan, H Brendan and Bryan, Brent},
  booktitle={Artificial Intelligence and Statistics},
  pages={272--279},
  year={2009},
  organization={PMLR}
}

@article{bnaya2013volatile,
  title={Volatile Multi-Armed Bandits for Guaranteed Targeted Social Crawling.},
  author={Bnaya, Zahy and Puzis, Rami and Stern, Roni and Felner, Ariel},
  journal={AAAI (Late-Breaking Developments)},
  volume={2},
  number={2.3},
  pages={16--21},
  year={2013}
}

@article{chakrabarti2008mortal,
  title={Mortal multi-armed bandits},
  author={Chakrabarti, Deepayan and Kumar, Ravi and Radlinski, Filip and Upfal, Eli},
  journal={Advances in neural information processing systems},
  volume={21},
  year={2008}
}

@article{li2019combinatorial,
  title={Combinatorial sleeping bandits with fairness constraints},
  author={Li, Fengjiao and Liu, Jia and Ji, Bo},
  journal={IEEE Transactions on Network Science and Engineering},
  volume={7},
  number={3},
  pages={1799--1813},
  year={2019},
  publisher={IEEE}
}

@inproceedings{cohen2016online,
  title={Online learning with feedback graphs without the graphs},
  author={Cohen, Alon and Hazan, Tamir and Koren, Tomer},
  booktitle={International Conference on Machine Learning},
  pages={811--819},
  year={2016},
  organization={PMLR}
}
\clearpage
\onecolumn

\appendices
\section{Proof of Theoreom \ref{thm:new_ucb_lp_result}}\label{app:proof_thm_1}
\begin{proof}
Let $i^*_a$ and $\mu^*_a$ denote the optimal action and the mean reward of the optimal action for the activation set $\cK_a$.
Let the mean reward gap for any suboptimal action $j\in \cK_a$ for any $a\in [A]$, be denoted as $\Delta_{j,a} = (\mu^*_a - \mu_j)$.
Also, denote $\cU_a=\cK_a\setminus i^*_a$ as the set of suboptimal actions for set $\cK_a$.
To ensure each active action is pulled at least once during round $m$, we require $n(m) \ge 1$ for a given time horizon $T$. Based on the definition of $n(m)$, this condition is satisfied for all $m \in \cM = \{0, 1, \dots, \lfloor \frac{1}{2}\log_2 T \rfloor - 1\}$. Hence, for each activation set $\cK_a$, the local round counter $m_a\in\cM$. 

The set of active arms at the start of round $m$ for activation set $\cK_a$ is denoted by $\cB_{m,a}$.
Also, let $\bar{\cB}_m=\{\cB_{m,1},...,\cB_{m,A}\}$ be the set containing the set of active actions at the start of round $m$ for all activation sets $\cK_a$.
If time $T$ does not run out after the end of round $m_a=\lfloor \frac{1}{2}\log_2 T \rfloor - 1$ for set $\cK_a$, for the next round $m$, every time set $\cK_a$ appears, we simply choose the current optimal action from active set $\cB_{m,a}$.


For each suboptimal action $j\in \cU_a$, define round $m_{j,a} := \min\{m\in \cM: \tilde{\Delta}_m <\Delta_{j,a}/2\}$, $\tilde{\Delta}_m = 2^{-m}$.
Under Assumption \ref{as:delta}, $\exists m \in\cM$ s.t. $\tilde{\Delta}_m <\Delta_{j,a}/2, \forall j \in\cK_a, \forall a\in[A]$. 
For optimal action $i^*_a$ of every set $\cK_a$, $m_{j,a}=\infty$ by convention.
Then by definition of $m_{j,a}$, for all sets $\cK_a$ for all rounds $m_a < m_{j,a}, \Delta_{j,a} \le 2 \tilde{\Delta}_{m_a}$ and

\begin{equation}\label{eq:delta_conditions}
    \frac{2}{\del{j,a}} < 2^{m_{j,a}} = \frac{1}{\tdel{m_{j,a}}} \le \frac{4}{\del{j,a}} < \frac{1}{\tdel{m_{(j+1,a)}}} = 2^{m_{(j+1,a)}}.
\end{equation}
Also, lets define $\ucb_j(m)$ and $\lcb_j(m)$ which are functions for action $j$ calculated at the end of round $m$, as the following, where $W(k,m) = \sqrt{\frac{\log(T\tdel{m}^2)}{2k}}$:
\begin{equation}\label{eq:ucb_lcb_def}
    \ucb_j(m) = \hat{f}_j(m) + W(T_j(m),m)\quad\text{and}\quad\: \lcb_j(m) = \hat{f}_j(m) - W(T_j(m),m),
\end{equation}
Note that $\ucb_j(m)$ and $\lcb_j(m)$ are well defined for each action $j\in\cK$ at all rounds $m$ since we do not assume that the action should be in the current active set to define these. Hence, if an action got eliminated in round $m'<m$, $T_j(m)=T_j(m')\: \text{and}\: \hat{f}_j(m) = \hat{f}_j(m') \: \forall m\ge m'$ i.e. the values of $T_j(m)$ and $\hat{f}_j(m)$ are frozen after the action got eliminated.

The proof technique is inspired by the that of the UCB-LP policy from \cite{buccapatnam2018reward}. We will analyze the regret by conditioning over two disjoint events defined as the \textit{good-event} $E$ and the \textit{bad-event} $E^c$. 
In order to define these events, we assume that we run a \textit{special version} of the the UCB-LP-A policy that does not simply terminate at time step $T$ but rather terminates at a time $t = \max(T_{new}, T)$, where $T_{new}$ is the time when it has completed the round $m^{\max}_a := \max_j m_{j,a}$ for activation set $\cK_a, \:\forall a \in [A]$. 
This ensures that we have atleast completed the round $m^{\max}_a$ for each activation set $\cK_a$, since this is crucial for the definition of the events $E$ and $E^c$.

\subsection{Definition of events $E$ and $E^c$}\label{sec:define_E}
Let $E$ be defined as the \textit{good event} where, for each activation set $\cK_a$, the optimal action $i^*_a$ of this set eliminates each suboptimal action $j\in \cK_a$ in or before its respective round $m_{j,a}$.
We assume that this event occurs with high probability.
Mathematically, it can be written down as an intersection of two events $E_1$ and $E_2$, which are defined as:
\begin{align}
    E_1 =& \Bigl\{\forall a \in [A], \text{optimal action}\: i^*_a\: \text{was not eliminated}\: \forall\: m: 0 \le m \le (m^{\max}_a-1)\Bigl\}\nonumber\\
    =& \bigcap_{a=1}^A \bigcap_{m=0}^{m^{\max}_a - 1} \bigcap_{j\in \cU_a} \{ \text{optimal action $i^*_a$ is not eliminated by action $j$ in round $m$}\}\\
    E_2 =& \Bigl\{ \forall a \in [A], \forall j \in \cU_a: \{\exists m\in\{0,...,m_{j,a}\}\: \ucb_j(m) < \lcb_{i^*_a}(m) \}\: \cup \nonumber\\
    &\quad\quad\quad\quad\{\text{optimal action $i^*_a$ is eliminated in or before round $(m_{j,a}-1)$}\} \Bigl\} \nonumber\\
    =& \bigcap_{a=1}^A \bigcap_{j\in \cU_a} \Bigl\{ \Bigl\{\bigcup_{m=0}^{m_{j,a}} \left\{ \ucb_j(m) < \lcb_{i^*_a}(m) \right\}\Bigl\}\: \cup \\
    &\quad\quad\quad\quad\{\text{optimal action $i^*_a$ is eliminated in or before round $(m_{j,a}-1)$}\} \Bigl\}
\end{align}
Event $E$ is just defined as: $E = E_1 \cap E_2$. Now, to define the \textit{bad event} $E^c$, we write the mathematical expressions for $E^c_1$ and $E^c_2$, which are complements of the events $E_1$ and $E_2$, respectively, given as:
\begin{align}
    E^c_1 =& \bigcup_{a=1}^A \bigcup_{m=0}^{m^{\max}_a - 1} \bigcup_{j\in \cU_a} \{ \text{optimal action $i^*_a$ is eliminated by action $j$ in round $m$}\}\\
    E^c_2 =& \bigcup_{a=1}^A \bigcup_{j\in \cU_a} \Bigl\{\Bigl\{ \bigcap_{m=0}^{m_{j,a}} \left\{ \ucb_j(m) \ge \lcb_{i^*_a}(m) \right\}\Bigl\}\: \cap \nonumber\\
    &\quad\quad\quad\quad\{\text{optimal action $i^*_a$ is not eliminated in or before round $(m_{j,a}-1)$}\} \Bigl\}
\end{align}

Hence, the event $E^c = E^c_1 \cup E^c_2$. Now, since we assume event $E$ corresponds to the event that would occur with high probability, event $E^c$ corresponds to the event that occurs with low probability. 
Now, to find the upper bound for the probability of $E^c$, we need to find upper bounds for the probability of $E^c_1$ and $E^c_2$.
Calculating the bound for $\bP(E^c_1)$ - 
\begin{align}\label{eq:prob_E_c_1}
    \bP(E^c_1) =& \sum_{a=1}^A \sum_{m=0}^{m^{\max}_a-1} \sum_{j\in\cU_a}  \bP(\text{optimal action $i^*_a$ is eliminated by action $j$ in round $m$}) \nonumber\\
    \le& \sum_{a=1}^A \sum_{m=0}^{m^{\max}_a-1} \sum_{j\in\cU_a } \dfrac{2}{T \tdel{m}^2}\quad \left( \text{using Lemma \ref{lemma:p_m_j}} \right) \nonumber\\
    \le& \sum_{a=1}^A \left(\dfrac{2|\cU_a|}{T}\right) \sum_{m=0}^{m^{\max}_a-1} \dfrac{1}{\tdel{m}^2} \nonumber\\
    \le& \sum_{a=1}^A \left(\dfrac{2|\cU_a|}{3T}\right) \dfrac{1}{\tdel{m^{\max}_a}^2} \left(\text{using $\tdel{m}=2^{-m}$ and} \sum_{m=0}^{m^{\max}_a-1} 4^m < \dfrac{4^{m^{\max}_a}}{3}\right) \nonumber\\
    \le& \dfrac{32}{3T} \sum_{a=1}^A \dfrac{|\cU_a|}{(\del{a}^{\min})^2} \left(\Delta_a^{\min}= \min_j \del{j,a}\: \text{and using \ref{eq:delta_conditions} for } j=\arg \max_i m_{j,a} = \arg \min_i \del{j,a} \right)
\end{align}
Now, let's calculate the upper bound for $\bP(E^c_2)$ - 
\begin{align}\label{eq:prob_E_c_2}
    \bP(E^c_2) =& \sum_{a=1}^A \sum_{j\in \cU_a} \bP \Bigl\{\Bigl\{ \bigcap_{m=0}^{m_{j,a}} \left\{ \ucb_j(m) \ge \lcb_{i^*_a}(m) \right\}\Bigl\} \:\cap \nonumber\\
    &\quad\quad\quad\quad\{\text{optimal action $i^*_a$ is not eliminated in or before round $(m_{j,a}-1)$}\} \Bigl\}  \nonumber\\
    \le& \sum_{a=1}^A \sum_{j\in \cU_a} \bP \Bigl\{ \left\{ \ucb_j(m_{j,a}) \ge \lcb_{i^*_a}(m_{j,a}) \right\}\: \cap \nonumber\\
    &\quad\quad\quad\quad\{\text{optimal action $i^*_a$ is not eliminated in or before round $(m_{j,a}-1)$}\} \Bigl\}  \nonumber\\
    \le& \sum_{a=1}^A \sum_{j\in \cU_a} \bP \left\{\text{optimal action $i^*_a\in \cB_{m_{(j,a)},a}$ and it did not eliminate action $j$ in round $m_{j,a}$} \right\} \nonumber\\
    \le& \sum_{a=1}^A \sum_{j\in \cU_a} \dfrac{2}{T\tdel{m_{j,a}}^2} \left(\text{using Lemma \ref{lemma:j_not_elim_m_j}}\right) \nonumber\\
    \le& \dfrac{32}{T} \sum_{a=1}^A \sum_{j\in \cU_a} \dfrac{1}{\del{m_{j,a}}^2} \left(\text{using $\dfrac{1}{\tdel{m_{(j,a)}}} \le \dfrac{4}{\del{j,a}}$ from \ref{eq:delta_conditions}}\right)
\end{align}

\subsection{Regret Analysis}
Recall that $R(T)$ represents the regret till the given time horizon $T$. Also recall that to define events $E$ and $E^c$, we assume that we run a \textit{special version} of the Algorithm \ref{alg:ucb-lp-new}, where we terminate the algorithm at time $t=\max(T_{new}, T)$ where $T_{new}$ is the time when it has completed the round $\max_j m_{j,a}$ for activation set $\cK_a, \:\forall a \in [A]$. In case $T<T_{new}$, i.e., the time horizon is less than the runtime of this special run, then the regret $R(T)$ would just be the accumulated regret counted only till time steps $T$ and not $T_{new}$. This makes sure that $R(T)$ is still referring to the same quantity even after introducing the events defined under \textit{special version} of the algorithm.

Now, we can calculate $\bE[R(T)]$ by conditioning over events $E$ and $E^c$ - 
\begin{align}
    \bE[R(T)] = \bE[R(T)|E]\bP(E) + \bE[R(T)|E^c]\bP(E^c)
\end{align}

\subsubsection{Calculating upper bound for $\bE[R(T)|E]$}
We would assume the event $E$ is given to be true throughout this part of the analysis.
Note that in Algorithm \ref{alg:ucb-lp-new}, we have two phases called the \textit{forced-sync phase} and \textit{independent phase}. The forced-sync phase corresponds to the phase of the algorithm when the round counters $m_a$ for each set $\cK_a$ are synced together, and we use the $z^*_{j,a}$ values to sample actions efficiently using the network side-information structure. The independent phase corresponds to the phase when the algorithm runs basic UCB with elimination as given in \cite{auer2010ucb} for each activation set independently, and we don't use $z^*_{j,a}$ based sampling. 

Define $\bar{m} := \max \Bigl\{ m \in \cM: \dfrac{1}{v_{\min}} \le 2 \tdel{m} \sum_{j \in \bigcup_a\cG_{m,a}} R_j(\bar{\cG}_m)\Bigl\}$, where $R_i(\bar{\cG}_m)$ can be calculated using Equation \ref{eq:cal_r} for action $i$, where $\bar{\cG}_m=\{\cG_{m,1},...,\cG_{m,A}\}$ with $\cG_{m,a} = \{j\in\cK_a:m_{j,a}\ge m\}$.
$\bar{\cG}_m$ is basically a special value of $\bar{\cB}_m$, which is the set containing the set of active actions at the start of round $m$ for all $\cK_a$.

Given event $E$, this indicates the last round $m$ for which we operate under the forced-sync phase.
Let $\cD_a := \{ j\in\cU_a: m_{j,a} > \bar{m} \}$ be the set of all suboptimal actions $j\in\cU_a$ that will be eliminated after round $\bar{m}$ given event $E$.
This means that the actions $j\in\cD_a$ experience both the phases, whereas actions $j\in\cU_a \setminus \cD_a$ only experience the forced-sync phase before getting eliminated.

Let $R^\sync(T)$ and $R^\ind(T)$ be the contributions from the forced-sync phase and the independent phase of the algorithm to make up $R(T)$, i.e., $\bE[R(T)] = \bE[R^\sync(T)] + \bE[R^\ind(T)]$.
Now, since we are interested in calculating an upper bound for $\bE[R(T)|E]$, we would need to compute upper bound to $\bE[R^\sync(T)|E]$ and $\bE[R^\ind(T)|E]$.



\paragraph{Calculating upper bound for $\bE[R^\sync(T)|E]$:} 
Let $N^\sync_{j,a}(m)$ denote the number of times action $j\in\cK_a$ is pulled during the forced-sync phase in round $m$.
Using Equation \ref{eq:regret_def}, but using rounds as base unit instead of time, we can write the conditional expectation $\bE[R^\sync(T)|E]$ as - 
\begin{equation}\label{eq:r_sync_def}
    \bE[R^\sync(T)|E] = \sum_{m=0}^{\bar{m}} \sum_{a=1}^A \sum_{j\in\cU_a} \bE[N^\sync_{j,a}(m)] \del{j,a}
\end{equation}
We need to get an upper bound on expected value $\mathbb{E}[N_{j,a}^{\text{sync}}(m)]$.
Lets just focus on round $m$ in the forced-sync phase.
Recall that in UCB-LP-A policy in Algorithm \ref{alg:ucb-lp-new}, we need at least $n(m)$ samples for each active action for each activation set $\cK_a$, before we go to the elimination step for that set $\cK_a$ at the end of round $m$. 
In the forced-sync phase, local round counters are synced and each activation set $\cK_a$ moves to the elimination step at the same time, i.e., as soon as each active action $j\in\cK_a$ for each $\cK_a,a\in[A]$, gets to $n(m)$ samples till the current time step during round $m$.

In this phase, when set $\cK_a$ is active, we sample each action (not just each active action) $i\in\cK_a$ with probability $\dfrac{z_{i,a}^*}{Z_a^*}$, where $Z_a^* = \sum_{j \in \cK_a} z_{j,a}^*$.
Recall that this phase is chosen with the idea that, for round $m$, getting to $n(m)$ samples for each action for each activation set $\cK_a$ using the $z^*$-sampling is much efficient than only sampling each \textit{active action} for each activation set $\cK_a$ to get to $n(m)$ samples (uniform sampling).
Now, recall that actions are made of subsets of base-arms, and we assumed that there are no useless base-arms. 
Hence, for \textit{each action} to get to $n(m)$ samples, we need \textit{each base-arm} to get to $n(m)$ samples. 
Let $v_i$ be the global rate at which base-arm $i\in\cN$ accumulates observations from different activation sets $\cK_a$. This is given by:
\begin{equation}\label{eq:v_i_def}
    v_i = \sum_{a=1}^A p_a \left( \sum_{j \in \cK_a \cap \cP_i} \frac{z_{j,a}^*}{Z_a^*} \right)
\end{equation}
The length of the round is determined by the ``slowest" base-arm (one with the minimum accumulation rate $v_i$).
At the start of round $m$ in forced-sync phase, each action already has at least $n(m-1)$ samples, which implies that each base-arm also already has at least $n(m-1)$ samples. 
In expectation, if the slowest base-arm gets $n$ samples in the current round, then each base-arm gets at least $n$ samples in this round.
Since, we don't know exactly which actions have more than $n(m-1)$ samples, we can get an upper bound for $\E[N^\sync_{j,a}(m)]$ by considering the worst case where the ``slowest" base-arm needs $\Delta n_m = [n(m) - n(m-1)]$ samples in round $m$ to end round $m$.

Let $T_m$ be the total number of time steps (total pulls) in this phase. The expected length is inversely proportional to the minimum rate:
\begin{equation}\label{eq:exp_t_m_def}
    \mathbb{E}[T_m] = \frac{\Delta n_m}{\min_{i \in \cN} v_i}
\end{equation}
From the coverage constraints of the LP minimization problem $P_1$ in \ref{eq:lp_problem_def}, we know that $\sum_{a} p_a \sum_{j \in \cK_a \cap \cP_i} z_{j,a}^* \ge 1, \forall i\in\cN$. Using this, we can lower bound $v_i$ as follows:
\begin{align}
    v_i &= \sum_{a=1}^A p_a \sum_{j \in \cK_a \cap \cP_i} \frac{z_{j,a}^*}{Z_a^*} \nonumber\\
    &\ge \frac{1}{\max_a Z_a^*} \underbrace{\left( \sum_{a=1}^A p_a \sum_{j \in \cK_a \cap \cP_i} z_{j,a}^* \right)}_{\ge 1 \text{ (LP Constraint)}} \nonumber\\
    &\ge \frac{1}{Z_{\max}^*} \left(\text{where}\: Z^*_{\max} = \max_a Z^*_a\right)
\end{align}
Substituting this back into the expectation for $T_m$ in Equation \ref{eq:exp_t_m_def}:
\begin{equation}\label{eq:exp_t_m_bound}
    \mathbb{E}[T_m] \le \frac{\Delta n_m}{1 / Z_{\max}^*} = \Delta n_m Z_{\max}^*
\end{equation}
Now, we calculate the expected number of times set $\cK_a$ appears in round $m$, denoted $E[N_a(m)]$:
\begin{equation}\label{eq:exp_N_a_def}
    \E[N_a(m)] = p_a \E[T_m] \le p_a (\Delta n_m Z_{\max}^*)
\end{equation}
Finally, the expected number of pulls for action $j\in\cK_a$ in round $m$ of the forced-sync phase is the expected number of times set $\cK_a$ appears in round $m$ multiplied by the conditional probability of choosing action $j$ given set $\cK_a$ is active:
\begin{align}\label{eq:exp_N_j,a_bound}
    \mathbb{E}[N^{\text{sync}}_{j,a}(m)] &= \mathbb{E}[N_a(m)] \cdot \left( \frac{z_{j,a}^*}{Z_a^*} \right) \nonumber\\
    &\le \left( p_a \Delta n_m Z_{\max}^* \right) \left( \frac{z_{j,a}^*}{Z_a^*} \right) = (p_a z_{j,a}^*) \left( \frac{Z_{\max}^*}{Z_a^*} \right) \Delta n_m
\end{align}
Using this in Equation \ref{eq:r_sync_def}, we get - 
\begin{align}\label{eq:r_sync_bound}
    \bE[R^\sync(T)|E] \le& \sum_{m=0}^{\bar{m}} \sum_{a=1}^A \sum_{j\in\cU_a} \left\{(p_a z_{j,a}^*) \left( \frac{Z_{\max}^*}{Z_a^*} \right) \Delta n_m\right\} \del{j,a}\nonumber\\
    =& \sum_{a=1}^A \sum_{j\in\cU_a} \left\{(p_a z_{j,a}^*) \left( \frac{Z_{\max}^*}{Z_a^*} \right) \sum_{m=0}^{\bar{m}} \Delta n_m\right\} \del{j,a} \nonumber\\
    =& \sum_{a=1}^A \sum_{j\in\cU_a} \left\{(p_a z_{j,a}^*) \left( \frac{Z_{\max}^*}{Z_a^*} \right) n(\bar{m})\right\} \del{j,a} 
\end{align}

\paragraph{Calculating upper bound for $\E[R^\ind(T)|E]$}
Let $N^\ind_{j,a}(m)$ denote the number of times action $j\in\cK_a$ is pulled during the independent phase in round $m$.
Unlike forced-sync phase, we do not need to pull each action from each activation set in the independent phase, and rather we only pull active actions. Given event $E$, we know that only actions $j\in\cD_a$ for each activation set $\cK_a$, will contribute to regret for the independent phase.
Also, given event $E$, we know that each suboptimal action $j\in\cU_a$ for each activation set $\cK_a$ will get eliminated in or before round $m_{j,a}$.

Now, to ensure that we get to at least $n(m)$ samples for each active action for a given set $\cK_a$, at the end of round $m$ for that set, we need to pull each active action a maximum of $\Delta n_m = [n(m) - n(m-1)]$ times.
Hence, $\E[N^\ind_{j,a}(m)]\le \Delta n_m, \forall m>\bar{m}$.
Using this, we can write the conditional expectation $\bE[R^\ind(T)|E]$ as - 
\begin{align}\label{eq:r_ind_bound}
    \bE[R^\ind(T)|E] =& \sum_{a=1}^A \sum_{j\in\cD_a} \sum_{m=\bar{m}}^{m_{j,a}} \bE[N^\ind_{j,a}(m)] \del{j,a} \nonumber\\
    \le& \sum_{a=1}^A \sum_{j\in\cD_a} \sum_{m=\bar{m}}^{m_{j,a}} \Delta n_m \del{j,a} \nonumber\\
    =& \sum_{a=1}^A \sum_{j\in\cD_a} \left\{n(m_{j,a}) - n(\bar{m})\right\} \del{j,a}
\end{align}

\paragraph{Upper Bound for $\E[R(T)|E]$}
Now, using inequalities in \ref{eq:r_sync_bound} and \ref{eq:r_ind_bound}, we can write regret contributions of every suboptimal action $j\in\cU_a$ for every $\cK_a$ as a single term, i.e., combining contributions appearing in both $\E[R^\sync(T)|E]$ and $\E[R^\ind(T)|E]$. Let $\gamma_{j,a}:=(p_a z_{j,a}^*) \left(\dfrac{Z_{\max}^*}{Z_a^*}\right)$. Then, we can write the upper bound for $\E[R(T)|E]$ as  - 
\begin{align}\label{eq:r_t_given_E_bound}
    \E[R(T)|E] &\le \sum_{a=1}^A \sum_{j \in \cU_a \setminus \cD_a} \gamma_{j,a} n(\bar{m}) \Delta_{j,a} + \sum_{a=1}^A \sum_{j \in \cD_a} \left[\gamma_{j,a}n(\bar{m}) -n(\bar{m}) + n(m_{j,a})\right]  \Delta_{j,a} \nonumber\\
    &\le \sum_{a=1}^A \sum_{j \in \cU_a \setminus \cD_a} \gamma_{j,a} \left( \frac{2 \log(T \tilde{\Delta}_{\bar{m}}^2)}{\tilde{\Delta}_{\bar{m}}^2} + 1\right) \Delta_{j,a} \quad+\nonumber\\
    &\quad\sum_{a=1}^A \sum_{j \in \cD_a} \left[\gamma_{j,a}\left( \frac{2 \log(T \tilde{\Delta}_{\bar{m}}^2)}{\tilde{\Delta}_{\bar{m}}^2}+1\right) - \left( \frac{2 \log(T \tilde{\Delta}_{\bar{m}}^2)}{\tilde{\Delta}_{\bar{m}}^2}\right) + \left( \frac{2 \log(T \tilde{\Delta}_{m_{(j,a)}}^2)}{\tilde{\Delta}_{m_{(j,a)}}^2} +1\right) \right] \Delta_{j,a}\nonumber\\
    &\le \sum_{a=1}^A \sum_{j \in \cU_a \setminus \cD_a} \gamma_{j,a} \left( \frac{2 \log(T \tilde{\Delta}_{\bar{m}}^2)}{\tilde{\Delta}_{\bar{m}}^2} \right) \Delta_{j,a} +\sum_{a=1}^A \sum_{j \in \cD_a} \left[(\gamma_{j,a}-1)\left( \frac{2 \log(T \tilde{\Delta}_{\bar{m}}^2)}{\tilde{\Delta}_{\bar{m}}^2}\right) + \left( \frac{2 \log(T \tilde{\Delta}_{m_{(j,a)}}^2)}{\tilde{\Delta}_{m_{(j,a)}}^2}\right) \right] \Delta_{j,a} \nonumber\\
    &\quad+\sum_{a=1}^A \sum_{j \in \cU_a} (\gamma_{j,a} + 1) \Delta_{j,a}
\end{align}

\subsubsection{Calculating upper bound for $\bE[R(T)|E^c]$}
Using inequalities \ref{eq:prob_E_c_1} and \ref{eq:prob_E_c_2}, we can see that the probability of the event $E^c$, i.e., $\bP[E^c_1] + \bP[E^c_2]$, is upper bounded by an expression inversely dependent on $T$. 
This means that it is a low probability event with the probability going to $0$ as $T$ goes to $\infty$.
Hence, given event $E^c$, we can simply bound the regret assuming that we pulled the worst possible action, i.e., the action with $\max_{j,a} \del{j,a}$, every time till time $T$. This gives us -
\begin{align}\label{eq:r_t_given_Ec_bound}
    \E[R(T)|E^c] \le \max_{j,a} \Delta_{j,a}
\end{align}

\subsubsection{Calculating upper bound for $\bE[R(T)]$}
Finally, using the upper bounds for $\E[R(T)|E]$ and $\E[R(T)|E^c]$ from inequalities \ref{eq:r_t_given_E_bound} and \ref{eq:r_t_given_Ec_bound}, using inequalities \ref{eq:prob_E_c_1} and \ref{eq:prob_E_c_2} to bound $\bP(E^c)$, using $\bP(E)\le1$ and combining terms that don't depend on time $T$ in $O(K)$, we get to our desired result.
\end{proof}

\section{Proof of Theorem \ref{thm:baseline_result}}\label{app:proof_thm_2}
\begin{proof}
The proof for the baseline, where we perform UCB with elimination, as given in \cite{auer2010ucb} and what we called UCB-E for the simulations, to each activation set $\cK_a$ independently, follows the exact logic like the proof for Theorem \ref{thm:new_ucb_lp_result} given in Appendix \ref{app:proof_thm_1}.
However, it differs only in the fact that the baseline does not use $z^*$-sampling to get samples for active actions.
This means that we perform UCB with elimination for the current active set $\cK_a$ and pull one action at the current time $t$ and we continue pulling actions for this round whenever set $\cK_a$ appears next time.
This is basically dealing with different sets $\cK_a$ independently and continuing where we left for each set whenever that set appears again.

We assume the same good event $E$ and it's compliment $E^c$ which means that the probability of the good event and the bad event uses the same upper bounds as in the proof in Appendix \ref{app:proof_thm_1}.
Since, under the good event, we know that each suboptimal action $j\in\cU_a$, for all $a\in[A]$, will be eliminated by their respective rounds $m_{j,a}$, we can bound the number of times each action will be pulled by $n(m_{j,a}) = \left\lceil \dfrac{2 \log(T \tilde{\Delta}_{m_{(j,a)}}^2)}{\tilde{\Delta}_{m_{(j,a)}}^2} \right\rceil$.

This gives us the upper bound on the regret contribution given good event $E$, i.e., $\bE[R(T)|E]$. 
Similar to the proof in Appendix \ref{app:proof_thm_1}, we can bound the regret contribution given the bad event $E^c$, i.e., $\bE[R(T)|E]$ assuming that we pull the worst action possible, i.e., the action with $\max_{j,a} \del{j,a}$, every time till time $T$.
Now using both these regret contributions, using the bounds for the probabilities of the good and bad events, and combining the time independent terms into $O(K)$, we get to out desired result.
\end{proof}

\newpage
\section{Supplementary Material}


\begin{lemma}[Chernoff-Hoeffding Inequality]\label{lemma:chernoff}
Let $X_{1},...,X_{n}$ be a sequence of random variables with support [0,1] and $\mathbb{E}[X_{t}]=\mu$ for all $t\le n$. Let $S_{n}=\frac{1}{n}\sum_{j=1}^{n}X_{j}$ Then, for all $\epsilon>0,$ we have,
$$\bP[S_{n}\ge\mu+\epsilon]\le e^{-2n\epsilon^{2}}$$
$$\bP[S_{n}\le\mu-\epsilon]\le e^{-2n\epsilon^{2}}$$
\end{lemma}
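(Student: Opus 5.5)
The statement is the standard Chernoff--Hoeffding inequality, and I will prove it by the exponential-moment (Chernoff) method combined with Hoeffding's lemma. I will work out the upper tail; the lower tail follows by applying the same argument to $Y_t = 1 - X_t$. One caveat: the statement only asks that $\bE[X_t]=\mu$. For the argument to go through I will read it, as in the applications in the appendix, as requiring either independence or the martingale-difference condition $\bE[X_t \mid X_1,\dots,X_{t-1}] = \mu$. Without one of these the bound is false.

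\textbf{Step 1 (Markov on the exponential).} For any $\lambda>0$, Markov's inequality gives
\begin{equation*}
\bP[S_n \ge \mu+\epsilon] \le e^{-\lambda n(\mu+\epsilon)}\, \bE\Bigl[e^{\lambda \sum_{t=1}^n X_t}\Bigr].
\end{equation*}

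\textbf{Step 2 (Hoeffding's lemma).} For a random variable $X\in[0,1]$ with mean $\mu$, I claim $\bE[e^{\lambda X}] \le e^{\lambda\mu + \lambda^2/8}$. I will prove this in two moves.
\begin{itemize}
\item By convexity, $e^{\lambda x} \le (1-x) + x e^{\lambda}$ on $[0,1]$. Taking expectations gives $\bE[e^{\lambda X}] \le 1-\mu+\mu e^{\lambda} = e^{L(\lambda)}$, where $L(\lambda) = \log(1-\mu+\mu e^{\lambda})$.
\item Compute $L(0)=0$ and $L'(0)=\mu$. Also $L''(\lambda)=q(1-q)\le 1/4$, where $q = \mu e^\lambda/(1-\mu+\mu e^\lambda)$. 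Taylor's theorem then yields $L(\lambda)\le \lambda\mu+\lambda^2/8$.
\end{itemize}
The same bound holds conditionally on $X_1,\dots,X_{t-1}$ under the martingale-difference reading.

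\textbf{Step 3 (peel off one factor at a time).} Condition successively on $X_1,\dots,X_{n-1}$, then on $X_1,\dots,X_{n-2}$, and so on, applying Step 2 at each stage. This gives
\begin{equation*}
\bE\Bigl[e^{\lambda \sum_t X_t}\Bigr] \le e^{n\lambda\mu + n\lambda^2/8}.
\end{equation*}
Substituting into Step 1 gives
\begin{equation*}
\bP[S_n \ge \mu+\epsilon] \le \exp\left(-\lambda n\epsilon + \frac{n\lambda^2}{8}\right).
\end{equation*}

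\textbf{Step 4 (optimise $\lambda$).} Choosing $\lambda = 4\epsilon$ makes the exponent $-2n\epsilon^2$, which is the claimed bound.

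\textbf{Main obstacle.} The only genuinely nontrivial step is the curvature bound in Hoeffding's lemma, specifically $L''\le 1/4$. It reduces to $q(1-q)\le 1/4$ for $q\in[0,1]$, which is elementary. Apart from that, the proof hinges on stating the independence or martingale assumption correctly.
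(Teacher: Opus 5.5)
Your proof is correct. The convexity bound $e^{\lambda x}\le 1-x+xe^{\lambda}$, the curvature computation $L''(\lambda)=q(1-q)\le 1/4$, the peeling argument via the tower property, the choice $\lambda=4\epsilon$ (which gives exponent $-\lambda n\epsilon+n\lambda^2/8=-2n\epsilon^2$), and the reflection $Y_t=1-X_t$ for the lower tail all check out.

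The paper gives no proof of this lemma. It simply quotes it as the standard Chernoff--Hoeffding bound, so there is no alternative route to compare yours against.

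Your caveat about the hypothesis is also justified. The standard formulation (e.g.\ in Auer, Cesa-Bianchi and Fischer) assumes $\bE[X_t\mid X_1,\dots,X_{t-1}]=\mu$. With only $\bE[X_t]=\mu$, as the paper writes it, the statement is false. For example, take $X_1=\dots=X_n\sim\text{Bernoulli}(1/2)$ and $\epsilon=1/2$: then $\bP[S_n\ge 1]=1/2>e^{-n/2}$ for every $n\ge 2$.
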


\begin{lemma}\label{lemma:j_not_elim_m_j}
The probability that suboptimal action $j\in\cK_a$ is not eliminated by its optimal action $i^*_a$ for set $\cK_a$ in round $m_{j,a}$, is at most $\dfrac{2}{T \tdel{m_{(j,a)}}^{2}}.$
\end{lemma}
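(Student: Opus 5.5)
We follow the standard improved-UCB argument of \cite{auer2010ucb}, in the form used in \cite{buccapatnam2018reward}. Write $m=m_{j,a}$ and $\tdel{m}=2^{-m}$. At the end of round $m$, both $j$ and $i^*_a$ are still in $\cB_{m,a}$; this is the only case in which the statement is nontrivial. Hence each has at least $n(m)$ observations, so $T_j(m),T_{i^*_a}(m)\ge n(m)\ge \frac{2\log(T\tdel{m}^2)}{\tdel{m}^2}$. With $W(k,m)=\sqrt{\log(T\tdel{m}^2)/(2k)}$, this gives $W(T_j(m),m),\,W(T_{i^*_a}(m),m)\le \tdel{m}/2$. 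One subtlety is that in the forced-sync phase the counts $T_j$ are random, and they come from stitched side-observations. We handle this by conditioning on the count: all we need is that the observations feeding $\hat f_j$ are i.i.d. with mean $\mu_j$ and that the count is at least $n(m)$. We then apply Lemma \ref{lemma:chernoff} with the deterministic count $n(m)$, or equivalently to the first $n(m)$ stitched samples. This avoids a union bound over random sample sizes.

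The core step is a deterministic containment. Suppose both confidence intervals hold, i.e. $\hat f_j(m)\le \mu_j+\tdel{m}/2$ and $\hat f_{i^*_a}(m)\ge \mu^*_a-\tdel{m}/2$. Then
\begin{equation*}
\ucb_j(m)\le \mu_j+\tdel{m} = \mu^*_a-\del{j,a}+\tdel{m} < \mu^*_a-\tdel{m}\le \lcb_{i^*_a}(m).
\end{equation*}
The strict inequality uses $\del{j,a}>2\tdel{m}$, which holds by the definition of $m_{j,a}$ (equivalently, \eqref{eq:delta_conditions}). In that case the elimination condition of Algorithm \ref{alg:action-elimination} fires for $j$, because the maximum over $\cB$ of the LCBs is at least $\lcb_{i^*_a}(m)$. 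Therefore, if $j$ is not eliminated by $i^*_a$ in round $m$, at least one of the two confidence events must fail.

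By Lemma \ref{lemma:chernoff} with $n=n(m)$ and $\epsilon=\tdel{m}/2$, each failure has probability at most
\begin{equation*}
\exp\!\left(-2\,n(m)\,\tdel{m}^2/4\right)\le \exp\!\left(-\log(T\tdel{m}^2)\right)=\frac{1}{T\tdel{m}^2}.
\end{equation*}
A union bound over the two events gives the claimed $\frac{2}{T\tdel{m_{(j,a)}}^2}$. Assumption \ref{as:delta} together with $m\in\cM$ ensures $T\tdel{m}^2\ge1$, so the logarithm is nonnegative and $n(m)\ge1$. The main obstacle is justifying the Hoeffding step under adaptive, stitched sampling. The sharpest version uses a fixed-$n(m)$ prefix of i.i.d. samples, which requires the round to end exactly when the count reaches $n(m)$; the algorithm's end-of-round rule provides this.
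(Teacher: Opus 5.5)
Your proposal is correct and follows essentially the paper's route. You show that the two one-sided confidence events force $\ucb_j(m_{j,a})<\lcb_{i^*_a}(m_{j,a})$, then bound each failure by $1/(T\tdel{m}^2)$ via Hoeffding and take a union bound; the only cosmetic difference is that you use radius $\tdel{m}/2$ with $\del{j,a}>2\tdel{m}$, where the paper uses $W(n(m_{j,a}),m_{j,a})<\del{j,a}/4$.

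One correction: the end-of-round rule does not make $T_j(m)=n(m)$ exactly, because side-observations keep arriving while the round waits on other actions or activation sets. So the fixed-prefix variant you mention is unavailable. The Hoeffding step instead rests on conditioning on $T_j(m)=k\ge n(m)$ and using $e^{-2k\epsilon^2}\le e^{-2n(m)\epsilon^2}$, which is what the paper implicitly does with the random count $T_j(m)$.
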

\begin{proof}
Let $\hat{f}_{j}(m)$ and $\hat{f}_{*}(m)$ be the sample mean of all observations for action $j\in\cK_a$ and set $\cK_a$'s optimal action $i^*_a$ at the end of round $m$ respectively. 
Let $T_j(m)$ and $T_*(m)$ be the total observation count for suboptimal action $j\in\cK_a$ and optimal arm $i^*_a$ at the end of round $m$ respectively.
Also, let $\mu_j$ and $\mu_*$ represent the true means of the suboptimal action $j$ and optimal action $i^*_a$ respectively.
Let $\cB_{m,a}$ denote the set of active actions at the start of round $m$ for set $\cK_a$.
Using UCB-LP-A policy as in Algorithm \ref{alg:ucb-lp-new}, at the end of round $m$ for a given set $\cK_a$, we know that for all active actions $j\in\cB_{m,a}$, we have at least $n(m)=\left\lceil\dfrac{2\log(T\tilde{\Delta}_{m}^{2})}{\tilde{\Delta}_{m}^{2}}\right\rceil$ observations. 

Using the definition of $n(m)$ and $m_{j,a}$ and using $W(k,m) = \sqrt{\dfrac{\log(T \tilde{\Delta}^2_m)}{2k}}$, in round $m_{j,a}$ we have - 
\begin{equation}\label{eq:not_elim_j_k_a_1}
    W(n(m_{j,a}),m_{j,a}) \le \frac{\tdel{m_{(j,a)}}}{2}<\frac{\del{j,a}}{4},
\end{equation}
    Now, for $m=m_{j,a}$, consider the following conditions -
\begin{align}
\label{eq:not_elim_j_k_a_2}
\left(\hat{f}_{j}(m_{j,a})\le\mu_{j}+ W(n(m_{j,a}),m_{j,a})\right) \text{ and } \left(\hat{f}_{*}(m_{j,a})\ge\mu_{*}-W(n(m_{j,a}),m_{j,a})\right),
\end{align}
In the elimination phase of the UCB-LP-A policy in round $m_{j,a}$, if the conditions in \ref{eq:not_elim_j_k_a_2} hold for action $j$ and $i^*_a$, then using property that $T_j(m_{j,a})\geq n(m_{j,a})$ and hence $W(n(m_{j,a}),m_{j,a})\ge W(T_j(m_{j,a}),m_{j,a}) \: \forall j \in\cB_{m,a}$, we have -
\begin{align}\label{eq:not_elim_j_k_a_3}
\hat{f}_{j}(m_{j,a})+W(T(m_{j,a}),m_{j,a}) &\le \hat{f}_{j}(m_{j,a}) + W(n(m_{j,a}),m_{j,a}) \nonumber\\ 
&\le\mu_{j} + 2W(n(m_{j,a}),m_{j,a}) \:\:(\text{using \ref{eq:not_elim_j_k_a_2}}) \nonumber\\
&<\mu_{j} + \del{j,a} - 2W(n(m_{j,a}),m_{j,a}) \:\:(\text{using \ref{eq:not_elim_j_k_a_1}}) \nonumber\\
&=\mu_{*} - 2W(n(m_{j,a}),m_{j,a}) \nonumber\\
&\le\hat{f}_{*}(m_{j,a}) - W(n(m_{j,a}),m_{j,a}) \nonumber\\
&\le \hat{f}_{*}(m_{j,a})-W(T(m_{j,a}),m_{j,a}),
\end{align}
which implies that action $j$ is eliminated by $i^*_a$ in round $m_{j,a}$ (Algorithm \ref{alg:action-elimination}). 
Hence, the probability that action $j$ is not eliminated by $i^*_a$ in round $m_{j,a}$ is upper bounded by the probability that either one of the inequalities in \ref{eq:not_elim_j_k_a_2} does not hold. 
We say upper bounded since one of the inequalities not holding does not guarantee that action $j$ is not eliminated, since the 2 inequalities together form a sufficient condition but not a necessary condition. 
Using Chernoff-Hoeffding bound (Lemma \ref{lemma:chernoff}), we can calculate the probability of each inequality in \ref{eq:not_elim_j_k_a_2} failing as follows:
\begin{align}
\bP\left[\hat{f}_{j}(m_{j,a})>\mu_{j}+W(n(m_{j,a}),m_{j,a})\right] \le \bP\left[\hat{f}_{j}(m_{j,a})>\mu_{j}+W(T(m_{j,a}),m_{j,a})\right] \le \frac{1}{T\tdel{m_{(j,a)}}^{2}} \label{eq:not_elim_j_k_a_4}\\
\bP\left[\hat{f}_{*}(m_{j,a})<\mu_{*}-W(n(m_{j,a}),m_{j,a})\right] \le \bP\left[\hat{f}_{*}(m_{j,a})<\mu_{*}-W(T(m_{j,a}),m_{j,a})\right] \le
\frac{1}{T\del{m_{(j,a)}}^{2}} \label{eq:not_elim_j_k_a_4}
\end{align}
Summing the above two inequalities gives us that the probability that the suboptimal action $j\in\cK_a$ is not eliminated by its optimal arm $i^*_a$ for $\cK_a$ in round $m_{j,a}$, is at most $\dfrac{2}{T\tdel{m_{(j,a)}}^2}$.
\end{proof}

\begin{lemma}\label{lemma:cond_elim_opt}
If the conditions in \ref{eq:w_conditions} hold, then suboptimal action $j\in\cK_a$ cannot eliminate optimal action for set $\cK_a$, denoted by $*$, in round $m$, given that $T_*(m) = k_*$ and $T_j(m) = k_j$, where $T_j(m)$ and $\hat{f}_j(m)$ denote the total observation count and sample mean for action $j\in\cK_a$ at the end of round $m$ respectively.
\begin{equation} \label{eq:w_conditions}
    \left(\hat{f}_*(m) \ge \mu_* - W(k_*,m)\right) \:\text{and}\: \left(\hat{f}_j(m) \le \mu_j + W(k_j,m)\right), \text{where} \:\: W(k,m) = \sqrt{\dfrac{\log(T \tilde{\Delta}^2_m)}{2k}}
\end{equation}
\end{lemma}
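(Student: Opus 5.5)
The plan is a direct chain of inequalities showing that, under the two conditions in \eqref{eq:w_conditions}, the UCB of the optimal action is strictly above the LCB of action $j$. That is exactly the negation of the deletion rule in Algorithm \ref{alg:action-elimination} applied to the optimal action with $j$ as the eliminator. By that rule, $*$ is eliminated by $j$ in round $m$ only if
$$\hat{f}_*(m) + W(k_*,m) < \hat{f}_j(m) - W(k_j,m).$$
So it suffices to prove the reverse (non-strict) inequality.

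Starting from the first condition in \eqref{eq:w_conditions}, I would write
$$\hat{f}_*(m) + W(k_*,m) \ge \mu_* - W(k_*,m) + W(k_*,m) = \mu_*.$$
Since $j\in\cK_a$ is suboptimal for $\cK_a$, we have $\mu_* \ge \mu_j$, that is, $\Delta_{j,a}\ge 0$. Then, using the second condition,
$$\mu_j \ge \hat{f}_j(m) - W(k_j,m) > \hat{f}_j(m) - W(k_j,m) - \text{(anything nonnegative)}.$$
Chaining these gives
$$\hat{f}_*(m)+W(k_*,m) \ge \mu_* \ge \mu_j \ge \hat{f}_j(m)-W(k_j,m),$$
so the strict elimination inequality fails. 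Hence $j$ cannot eliminate $*$ in round $m$, regardless of the values of $k_*$ and $k_j$.

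The only point needing care is to check that the widths used in the Eliminate subroutine are exactly $W(T_*(m),m)$ and $W(T_j(m),m)$, which equal $W(k_*,m)$ and $W(k_j,m)$ under the conditioning $T_*(m)=k_*$, $T_j(m)=k_j$. One must also check that $\log(T\tdel{m}^2)\ge 0$ for $m\in\cM$, so that $W$ is real and nonnegative. This follows from $\tdel{m}^2 \ge 1/T$ for $m\le \lfloor\frac12\log_2 T\rfloor-1$. The argument uses neither any probability nor the round structure, so I expect no real obstacle. The lemma is then used downstream, combined with Lemma \ref{lemma:chernoff} and a union bound over $k_*,k_j$, to bound the probability that the optimal action is eliminated, as needed for $\bP(E^c_1)$.
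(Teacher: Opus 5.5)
Your proof is correct and is essentially the paper's argument. The paper writes $\hat{f}_*(m)-\hat{f}_j(m)\ge(\mu_*-\mu_j)-\bigl(W(k_*,m)+W(k_j,m)\bigr)\ge-\bigl(W(k_*,m)+W(k_j,m)\bigr)$ and notes that this contradicts the strict elimination inequality, which is your chain $\hat{f}_*(m)+W(k_*,m)\ge\mu_*\ge\mu_j\ge\hat{f}_j(m)-W(k_j,m)$ rearranged. The one blemish is the aside ``$>\hat{f}_j(m)-W(k_j,m)-(\text{anything nonnegative})$'', which is false when that quantity is zero; it plays no role in the chain and can be dropped.
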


\begin{proof}
Assuming \ref{eq:w_conditions} is true and using the fact that $(\mu_* - \mu_j) \ge 0$, we get:
\begin{align}
\label{eq:opt_arm_not_elim_1}
    \hat{f}_*(m) - \hat{f}_j(m) &\ge (\mu_* - \mu_j) - (W(k_*,m) + W(k_j,m)) \ge -(W(k_*,m) + W(k_j,m))
\end{align}
The condition for suboptimal action $j\in\cK_a$ eliminating optimal action $*$, in round $m$, given $T_*(m) = k_*$ and $T_j(m) = k_j$, is given by:
\begin{align}
    \textit{(UCB score of *)} = \: &\hat{f}_*(m) + W(k_*,m) < \hat{f}_j(m) - W(k_j,m) = \textit{(LCB score of } j) \nonumber\\
    \implies &\hat{f}_*(m) - \hat{f}_j(m) < -(W(k_*,m) + W(k_j,m)) \label{eq:opt_arm_not_elim_2}
\end{align}
Since \ref{eq:opt_arm_not_elim_1} and \ref{eq:opt_arm_not_elim_2} are contradictory, it implies that given the conditions in \ref{eq:w_conditions}, suboptimal action $j\in\cK_a$ cannot eliminate optimal action $*$ of set $\cK_a$.
\end{proof}

\begin{lemma}\label{lemma:p_m_j}
The probability that the optimal action $*$ of a given activation set $\cK_a$ is eliminated by a specific suboptimal action $j\in\cK_a$ in round $m$, denoted by $p_{m}^j$, is upper-bounded by $\dfrac{2}{T \tdel{m}^2}$.
\end{lemma}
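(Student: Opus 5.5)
The plan is to reduce elimination of the optimal action to a failure of one of the two confidence conditions in Lemma~\ref{lemma:cond_elim_opt}, and then bound each failure with Lemma~\ref{lemma:chernoff}.

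Fix an activation set $\cK_a$, a round $m$, and a suboptimal action $j\in\cK_a$. Suppose action $j$ eliminates the optimal action $*$ at the end of round $m$. By the rule of Algorithm~\ref{alg:action-elimination}, this means
\[
\hat{f}_*(m)+W(T_*(m),m)<\hat{f}_j(m)-W(T_j(m),m).
\]
By Lemma~\ref{lemma:cond_elim_opt}, applied with $k_*=T_*(m)$ and $k_j=T_j(m)$, this cannot happen when both inequalities in \eqref{eq:w_conditions} hold. So the event is contained in
\[
\{\hat{f}_*(m)<\mu_*-W(T_*(m),m)\}\cup\{\hat{f}_j(m)>\mu_j+W(T_j(m),m)\},
\]
and a union bound gives
\[
p_m^j\le \bP\big[\hat{f}_*(m)<\mu_*-W(T_*(m),m)\big]+\bP\big[\hat{f}_j(m)>\mu_j+W(T_j(m),m)\big].
\]

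Each term should be bounded by conditioning on the observation count. Fix $T_*(m)=k$. The sample mean is then an average of $k$ i.i.d. observations with support $[0,1]$, drawn from the reward $f_*$ assembled from the observed base-arms. Lemma~\ref{lemma:chernoff} with $\epsilon=W(k,m)$ gives
\[
\bP\big[\hat{f}_*(m)<\mu_*-W(k,m)\big]\le \exp\!\big(-2k\,W(k,m)^2\big)=\exp\!\big(-\log(T\tdel{m}^2)\big)=\frac{1}{T\tdel{m}^2}.
\]
The cancellation of $k$ makes this bound uniform in $k$. Averaging over the law of $T_*(m)$ therefore gives the same bound unconditionally. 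The same argument for action $j$ gives $1/(T\tdel{m}^2)$, and summing the two terms yields $2/(T\tdel{m}^2)$.

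The main obstacle is justifying that conditioning on a random count $T_*(m)=k$ leaves an i.i.d. sample. The count depends on the policy's past choices, which depend on past observations. I would handle this with the standard device of pre-drawing an infinite i.i.d. sequence of stitched samples for each action. The $s$-th observation of action $\ell$ is then the $s$-th element of this sequence regardless of the policy, so $\hat{f}_\ell(m)$ equals the running mean of the first $T_\ell(m)$ elements.

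The cleanest statement is to bound the event that some prefix mean deviates by $W(k,m)$, for $k$ ranging over the possible counts. A union over $k$ would cost a factor, though. So I would note, as the paper does in Lemma~\ref{lemma:j_not_elim_m_j}, that the per-$k$ bound does not depend on $k$, and apply it conditionally. I would also note that stitched samples for an action built from base-arm observations remain i.i.d., because base-arms are independent across time and across base-arms. This makes Lemma~\ref{lemma:chernoff} applicable to $f_\ell$.

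Finally, I need $\log(T\tdel{m}^2)>0$, so that $W$ is well defined. This holds for every $m\in\cM$ by the choice $m<\lfloor\frac{1}{2}\log_2T\rfloor$.
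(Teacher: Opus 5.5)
Your reduction is the paper's proof step for step. Elimination of $*$ by $j$ forces one of the two conditions of Lemma~\ref{lemma:cond_elim_opt} to fail. Each failure is bounded by $1/(T\tdel{m}^2)$ via Lemma~\ref{lemma:chernoff}, conditionally on $T_*(m)=k_*$ and $T_j(m)=k_j$. That bound does not depend on $(k_*,k_j)$, and the law of total probability removes the conditioning. As a reconstruction of the paper's argument it is complete. However, the step you flag as the main obstacle is not closed by your remark, and the paper's proof passes over it in exactly the same way.

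In your stack model, Lemma~\ref{lemma:chernoff} bounds $\bP[\bar X^*_k<\mu_*-W(k,m)\mid T_*(m)=k]$ only if the event $\{T_*(m)=k\}$ is independent of the prefix $(X^*_1,\dots,X^*_k)$. It is not.

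\begin{itemize}
\item The elimination tests of rounds $m'<m$ use $\hat f_*(m')$, which is an average of a prefix of the same stack. Through $\lcb_*$ it enters the $\max$ in Algorithm~\ref{alg:action-elimination}. If $*$ also lies in some $\cK_b$ where it is suboptimal, its own elimination there depends on $\ucb_*$.
\item Those decisions fix the active sets, hence the phase, which actions are pulled, and when round $m$ ends. In other words, they determine $T_*(m)$.
\item For instance, unusually low early samples can remove $*$ from $\cB_b$. After that it is no longer pulled there in the independent phase. So conditioning on a small $k$ tilts the prefix downward, toward exactly the event you are bounding.
\end{itemize}

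The fact that $e^{-2kW(k,m)^2}$ is free of $k$ only justifies the final averaging step. It does not justify the conditional Hoeffding bound itself.

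There are two ways to make the lemma rigorous.
\begin{itemize}
\item \emph{Change the test, keep the constant.} Run the round-$m$ test on the mean of exactly the first $n(m)$ stacked samples of each action, with width $W(n(m),m)$. This is a small modification of the algorithm, and every action of $\cB_{m,a}$ has at least $n(m)$ samples when the round ends. The two failure events then concern fixed-length prefixes of policy-independent i.i.d.\ sequences. Lemma~\ref{lemma:chernoff} applies unconditionally and gives $2/(T\tdel{m}^2)$.
\item \emph{Keep the estimator as written.} You must then control the deviation uniformly over $k\ge n(m)$, either by your union bound or by a maximal inequality with peeling. This is valid but loses a logarithmic factor and part of the exponent. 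The stated $2/(T\tdel{m}^2)$ is then not recovered unless $W$ is enlarged.
\end{itemize}
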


\begin{proof}
Let $E$ be the event that the optimal action $*$ of a given activation set $\cK_a$ is eliminated by the suboptimal action $j\in\cK_a$ in round $m$. This elimination occurs if the following inequality holds:
\begin{equation} \label{eq:p_m_j_1}
    \hat{f}_*(m) + W(T_*(m), m) \le \hat{f}_j(m) - W(T_j(m), m)
\end{equation}
where $W(k, m) = \sqrt{\dfrac{\log(T\tilde{\Delta}_m^2)}{2k}}$ is the confidence width, and $T_*(m), T_j(m)$ are random variables representing the number of samples collected for actions $*$ and $j$ up to round $m$.

We first analyze the probability of this event conditioned on fixed sample counts i.e. $\bP(E|\: T_*(m) = k_*, T_j(m) = k_j)$, where $k_*$ and $k_j$ are integers such that $k_*, k_j \ge n(m)$. 
We know that the elimination condition in \ref{eq:p_m_j_1} given $T_*(m)=k_*, T_j(m)=k_j$, i.e., $\hat{f}_*(m)$ and $\hat{f}_j(m)$ are mean estimates using $k_*$ and $k_j$ samples respectively, will not be satisfied if the two conditions in Lemma \ref{lemma:cond_elim_opt} are satisfied. 
Hence, in order for suboptimal action $j$ to eliminate optimal action $*$, we need that atleast one of the two conditions in Lemma \ref{lemma:cond_elim_opt} should not be satisfied i.e. either one of the following conditions hold:

\begin{equation} \label{eq:p_m_j_2}
    \left(\hat{f}_*(m) < \mu_* - W(k_*,m)\right) \:\text{or}\: \left(\hat{f}_j(m) > \mu_j + W(k_j,m)\right), 
\end{equation}
We can bound the probability of above conditions using Chernoff-Hoeffding's inequality (Lemma \ref{lemma:chernoff}) to get:
\begin{align}\label{eq:p_m_j_3}
    \bP\left(\hat{f}_*(m) < \mu_* - W(k_*, m) \mid T_*(m)=k_*\right) &\le \exp\left(-2 k_* W^2(k_*, m)\right) \nonumber\\
    &\le \exp\left(-2 k_* \frac{\log(T\tilde{\Delta}_m^2)}{2k_*}\right) \le \frac{1}{T\tilde{\Delta}_m^2}
\end{align}
and similarly, 
\begin{align}\label{eq:p_m_j_4}
    \bP\left(\hat{f}_j(m) > \mu_j + W(k_j, m) \mid T_j(m)=k_j\right) \le \frac{1}{T\tilde{\Delta}_m^2}
\end{align}

The probability of suboptimal action $j$ eliminating optimal action $*$ is upper-bounded by the sum of probabilities given in inequalities \ref{eq:p_m_j_3} and \ref{eq:p_m_j_4}. Hence, we get:
\begin{equation}\label{eq:p_m_j_5}
\bP(E \mid T_*(m)=k_*, T_j(m)=k_j) \le \frac{1}{T\tilde{\Delta}_m^2} + \frac{1}{T\tilde{\Delta}_m^2} = \frac{2}{T\tilde{\Delta}_m^2}
\end{equation}
Crucially, this bound is independent of the specific values of $k_*$ and $k_j$.
Finally, we obtain the unconditional probability $p_{m}^j$ by summing over all valid sample counts using the Law of Total Probability:
\begin{align*}
    p_{m}^j &= \sum_{k_*\ge n(m)} \sum_{k_j \ge n(m)} \bP(E \mid T_*(m)=k_*, T_j(m)=k_j) \cdot \bP(T_*(m)=k_*, T_j(m)=k_j) \\
    &\le \sum_{k_* \ge n(m)} \sum_{k_j \ge n(m)} \left( \frac{2}{T\tilde{\Delta}_m^2} \right) \cdot \bP(T_*(m)=k_*, T_j(m)=k_j) \\
    &= \frac{2}{T\tilde{\Delta}_m^2} \underbrace{\sum_{k_* \ge n(m)} \sum_{k_j \ge n(m)} \bP(T_*(m)=k_*, T_j(m)=k_j)}_{\le 1} \le \frac{2}{T\tilde{\Delta}_m^2}
\end{align*}
\end{proof}

\end{document}